\documentclass{article}

\PassOptionsToPackage{numbers, compress}{natbib}

\usepackage[main, final]{neurips_2026}

\usepackage[utf8]{inputenc}
\usepackage[T1]{fontenc}
\usepackage{hyperref}
\usepackage{url}
\usepackage{booktabs}
\usepackage{amsfonts}
\usepackage{nicefrac}
\usepackage{microtype}
\usepackage[table]{xcolor}

\usepackage{amsmath}

\usepackage{multirow}
\usepackage{booktabs}

\usepackage{enumitem}

\usepackage{graphicx}
\usepackage{subcaption}
\usepackage{wrapfig}
\usepackage{algorithm}
\usepackage[noend]{algpseudocode}
\usepackage{amsthm}
\usepackage{needspace}
\usepackage{titletoc}
\newtheorem{proposition}
{Proposition}

\title{Triggering Generalist Reasoning via Predictive Uncertainty for Dual-System VLA}

\author{
Hyemin Yang\textsuperscript{1}\thanks{Equal contribution}\quad
Wooseong Jeong\textsuperscript{1}\footnotemark[1]\quad
Giwon Lee\textsuperscript{1}\footnotemark[1]\quad
Kuk-Jin Yoon\textsuperscript{1}\thanks{Corresponding author.}\\
\textsuperscript{1}\,KAIST\\[3pt]
\texttt{\{hyemin0806,stk14570,dlrldnjs,kjyoon\}@kaist.ac.kr} \\
\href{https://tudneurips2026.github.io/}{\color{magenta}{https://tudneurips2026.github.io/}}
}

\begin{document}
\maketitle
\begin{abstract}
Dual-system Vision-Language-Action (VLA) models improve real-time robotic control by pairing a slow, reasoning-capable generalist with a fast specialist action expert. However, existing methods invoke the generalist at a fixed frequency, ignoring the fact that decision-making complexity varies throughout a rollout. This static strategy wastes computation in easy phases and can delay renewed reasoning when the scene changes unexpectedly. We propose \textbf{TUD} (\textbf{T}riggering generalist reasoning via predictive \textbf{U}ncertainty for \textbf{D}ual-system VLA), an adaptive inference framework that selectively skips unnecessary generalist calls. TUD measures the cross-step dispersion of action re-predictions at the upcoming chunk slot under the cached generalist context, as a predictive uncertainty signal. This signal captures how much the future action plan shifts as new observations arrive and is computed from forwards the architecture already runs, requiring neither manual phase labels nor an auxiliary uncertainty model. On VLA-Arena, it achieves a higher success rate at matched call budgets than alternative uncertainty baselines while maintaining low wall-clock overhead, and more consistently separates successful from failed rollouts.
Also, TUD finds a more favorable cost--success trade-off than non-adaptive baselines, tracing an entire operating curve as a single threshold is varied, and substantially reduces VLM calls at matched success rate. The same trade-off appears in our real-robot experiments, where TUD cuts generalist calls by 75\% relative to the strongest fixed-interval baseline while achieving an even higher success rate. Our results suggest that predictive uncertainty provides a practical criterion for adaptive reasoning in efficient VLA control.

\end{abstract}

\section{Introduction}
\label{sec:introduction}
Enabling robots to perform diverse tasks across varying environments has been a long-standing challenge. Conventional robot learning has largely focused on policies tailored to specific tasks and robot platforms, which achieve high accuracy within their target scenario but generalize poorly to novel instructions or scenes~\cite{mees2022calvin, VLA_Benchmark_zhang2024vlabench}. Recent work addresses this limitation by leveraging pretrained large language or vision-language models~\cite{PaliGemma_beyer2024paligemma, Prismatic_karamcheti2024prismatic} to reformulate manipulation as a Vision-Language-Action (VLA) problem~\cite{VLA_Archi_kim2024openvla, VLA_not_black2024pi0visionlanguageactionflowmodel, kim2025fine, bu2025univla, shukor1844smolvla}, enabling broader generalization across instructions and scenes.

The deployment of single-system VLAs is, however, bottlenecked by limited
inference throughput, as billion-scale VLMs are too slow for real-time
contact-rich manipulation. To reconcile reasoning with fast control, a body of work adopts dual-system architectures~\cite{VLA_Dual_bu2024towards, VLA_Dual_zhang2024hirt, VLA_Dual_shentu2024llms, VLA_Dual_cui2025openhelix, VLA_Dual_chen2025fast, VLA_Dual_zhouchatvla}, inspired by the separation between slow deliberative reasoning and fast reactive behavior~\cite{kahneman2011thinking}. The \emph{generalist} (System 2) is a large VLM that consumes the full multimodal context and emits high-level reasoning or a long-horizon action plan. The \emph{specialist} (System 1) is a lightweight action expert that generates low-latency action chunks conditioned on the generalist's plan. This decoupling amortizes the heavy VLM forward across many specialist steps while allowing the specialist to absorb fresh observations between generalist calls. This gain comes with a new design question that does not exist in single-system policies: \emph{when} should the
generalist be re-invoked?

Existing dual-system VLAs leave this question unanswered and use a fixed call interval that interleaves generalist call and specialist at a static frequency~\cite{VLA_Dual_bu2024towards, VLA_Dual_chen2025fast, VLA_Dual_cui2025openhelix}. OneTwoVLA~\cite{VLA_Dual_lin2025onetwovla} instead learns a decision token to toggle between reasoning and acting modes, but requires manually annotated phase boundaries, which limits scalability and transfer to unseen tasks. In reality, scene difficulty changes step by step within a single rollout, so any fixed call interval wastes generalist calls on easy phases (e.g., a static scene) and reacts too slowly when an unexpected event (e.g., a moved obstacle) arises.

An effective re-invocation signal should capture how the long-horizon action plan shifts in response to changes in the surrounding environment. When the scene evolves away from what the cached plan assumed, the predicted actions for upcoming time steps diverge from the existing plan, and a useful trigger must reflect this divergence.
To preserve the throughput advantage that motivated the dual-system split in the first place, the signal should also be computable from the specialist forwards the architecture already runs at inference time, without auxiliary uncertainty models or manually labeled phase boundaries.

We propose \textbf{TUD} (\textbf{T}riggering generalist reasoning via predictive \textbf{U}ncertainty for \textbf{D}ual-system VLA), which adapts generalist
invocation using \emph{predictive action-plan dispersion}. At each step, the specialist re-predicts the upcoming action chunk against the most recent generalist context. The dispersion is the standard deviation of these re-predictions, remaining low when the cached plan still fits the current scene and increasing when it becomes stale. The invocation rule then defers the generalist call at each decision point when the running mean of the dispersion stays below a threshold, ensuring that the cached plan does not drift too far from a fresh specialist re-prediction.

We adapt the single-system $\pi_0$~\cite{VLA_not_black2024pi0visionlanguageactionflowmodel} backbone into a dual-system. On VLA-Arena~\cite{zhang2025vla}, the dispersion provides a more effective invocation signal for adaptive generalist calling than alternative uncertainty signals~\cite{Uncertainty_Dagger_lee2025diff-dagger,kou2024bayesdiff, de2025diffusion, jazbec2025generative, Uncertainty_VLA_ACC_Liang_2026_CVPR}, achieving the highest success rate with low wall-clock under the matched budgets. It also consistently separates successful from failed rollouts and tracks within-rollout difficulty. Empirically, TUD finds a more favorable cost--success trade-off than non-adaptive baselines as the threshold is varied. Against the $\pi_0$ baseline, TUD substantially reduces VLM calls at a matched success rate.
Furthermore, we validate TUD on a real SO-101 robotic arm and show that, averaged across three tasks, TUD achieves a higher success rate with fewer generalist calls than all baselines.

Our contributions are as follows:
\vspace{-5pt}
\begin{itemize}[leftmargin=*, topsep=2pt, itemsep=1pt, parsep=0pt, partopsep=0pt]
\item We surface adaptive generalist calling as a central design axis of dual-system VLAs and identify two preconditions missing from existing fixed-interval designs, an interval-flexible architecture and a cheap online invocation signal.
\item We propose TUD, a framework that triggers the generalist via the predictive action-plan dispersion, an inference-time signal computed from existing specialist forwards. A simple thresholding rule turns this signal into a call-or-skip decision, and varying the threshold traces an entire operating curve in the (task success rate, VLM-call budget) plane.
\item TUD finds a more favorable cost--success trade-off than non-adaptive baselines, substantially reducing VLM calls at matched success rate, and the result transfers to a real SO-101 arm.
\end{itemize}

\section{Problem Formulation}
\label{sec:preliminaries}

\subsection{Dual-System VLA}
\label{sec:dual_system_formulation}

A Vision-Language-Action (VLA) policy maps a language instruction $I$ and an observation $O_t$ at step $t$ to a robot action $a_t \in \mathbb{R}^{d_a}$, where $d_a$ is the action dimension. To balance reasoning capacity with control throughput, dual-system VLAs~\cite{VLA_Dual_bu2024towards, VLA_Dual_zhang2024hirt, VLA_Dual_shentu2024llms, VLA_Dual_cui2025openhelix, VLA_Dual_chen2025fast, VLA_Dual_zhouchatvla, VLA_Dual_lin2025onetwovla} decompose this mapping into two modules. A \emph{generalist} $\mathcal{G}$ (System~2) is a large vision-language model that performs language-grounded reasoning. A \emph{specialist} $\mathcal{S}$ (System~1) is a lightweight action policy that produces low-latency actions. The generalist is invoked at a low frequency, and the specialist runs at every control step and produces actions by combining the generalist's most recent output with the latest observation.

\begin{figure}[t]
\centering
\begin{minipage}[t]{0.54\linewidth}
\vspace{0pt}
\centering
\includegraphics[width=\linewidth]
{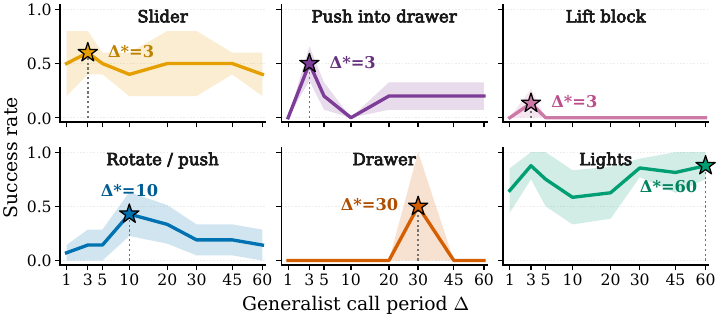}
\subcaption{OpenHelix on CALVIN: best $\Delta^\star$ varies across tasks.}
\label{fig:openhelix_freq}
\end{minipage}
\hfill
\begin{minipage}[t]{0.45\linewidth}
\vspace{-0.5mm}
\vspace{0pt}
\centering
\setlength{\tabcolsep}{4pt}
\renewcommand{\arraystretch}{1.09}
\resizebox{\linewidth}{!}{%
\begin{tabular}{ccccccc}
\toprule
\multirow{2}{*}{$\Delta$}
 & \multicolumn{5}{c}{Tasks completed in a row (\%) $\uparrow$}
 & \multirow{2}{*}{Avg.\ Len.\ $\uparrow$} \\
\cmidrule(lr){2-6}
 & 1 & 2 & 3 & 4 & 5 & \\
\midrule
2           & 96.0 & 82.0 & 71.0 & 64.0 & 51.0 & 3.64 \\
4           & 92.0 & 79.0 & 65.0 & 60.0 & 53.0 & 3.49 \\
8           & 95.0 & 74.0 & 58.0 & 51.0 & 44.0 & 3.22 \\
12          & 69.0 & 40.0 & 15.0 &  4.0 &  0.0 & 1.28 \\
16          & 59.0 & 28.0 & 12.0 &  3.0 &  0.0 & 1.02 \\
20          & 47.0 & 20.0 &  8.0 &  0.0 &  0.0 & 0.75 \\
\bottomrule
\end{tabular}%
}
\subcaption{Robodual on CALVIN at varying $\Delta$.}
\label{fig:robodual_freq}
\end{minipage}
\caption{Two properties of existing dual-system VLAs that motivate adaptive generalist calling. \textbf{(a)} Per-task success rate of OpenHelix on CALVIN at varying generalist call interval $\Delta$ ($N$ on the x-axis); the best $\Delta^\star$ differs by task category. \textbf{(b)} Long-horizon rollout performance of Robodual on CALVIN at varying $\Delta$, averaged over 100 rollouts per $\Delta$.}
\label{fig:dual_system_properties}
\end{figure}

Let $t_k$ index the time steps at which the generalist is invoked, with $t_0 < t_1 < \cdots$ and call interval $\Delta = t_{k+1} - t_k$. Existing dual-system VLAs commit to a single fixed $\Delta$ chosen offline, and two observations show why no choice is globally satisfactory. \emph{First, the optimal $\Delta$ is task-dependent.} Figure~\ref{fig:openhelix_freq} sweeps the generalist call interval for OpenHelix~\cite{VLA_Dual_cui2025openhelix} on CALVIN~\cite{mees2022calvin}: contact-rich short-horizon tasks (Slider, Push, Lift) peak at $\Delta^\star = 3$, while quasi-static toggling tasks (Lights) peak at $\Delta^\star = 60$. No fixed $\Delta$ matches both ends of this spread simultaneously.

\begin{wraptable}{r}{0.40\linewidth}
\vspace{-12pt}
\centering
\caption{Per-call cost on Robodual.}
\label{tab:throughput_latency}
\vspace{-4pt}
\small
\setlength{\tabcolsep}{4pt}
\renewcommand{\arraystretch}{1.05}
\begin{tabular}{lccc}
\toprule
 & Spec. & Gen. & Dual \\
\midrule
Throughput (Hz) $\uparrow$ & 36.98 & 2.70  & 14.79 \\
Latency (s) $\downarrow$   &   0.216 & 0.370 &  - \\
\bottomrule
\end{tabular}
\vspace{-8pt}
\end{wraptable}
\emph{Second, smaller $\Delta$ uniformly improves Robodual's~\cite{VLA_Dual_bu2024towards} long-horizon success on CALVIN.} Figure~\ref{fig:robodual_freq} shows that increasing $\Delta$ beyond $8$ collapses long-horizon success, while denser calls offer only modest gains. But denser calls erode throughput: Table~\ref{tab:throughput_latency} reports a specialist throughput of $36.98$ Hz but only $2.70$ Hz for the generalist, with each generalist call taking $0.370$ s. Thus, each extra generalist call introduces a substantial latency burden despite the benefit of denser reasoning. The right $\Delta$ also varies across rollouts of the same task, since easy and hard phases come and go within an episode. Together, these observations surface two preconditions for adaptive calling, an interval-flexible architecture that remains effective across a range of $\Delta$ values, and an inference-time signal of when a fresh generalist call is needed.

\subsection{Adaptive Generalist Calling}
\label{sec:adaptive_calling}

Existing dual-system VLAs fix the generalist call interval $\Delta = t_{k+1} - t_k$ to a single value chosen offline, which over-invokes $\mathcal{G}$ in easy phases and under-invokes it in hard ones.

We instead formulate generalist calling as a periodic call/skip decision. At each \emph{decision point}, a time step at which the rule fires, an \emph{invocation policy} $\pi$ selects an action from $\{\textsc{call}, \textsc{skip}\}$ based on the running context, choosing whether to invoke $\mathcal{G}$ at this point or to defer the call to the next one. The objective is to find $\pi^\star$ that maximizes task success while minimizing the number of generalist invocations:
\begin{equation}
    \pi^\star \;=\; \arg\max_{\pi}\; \mathbb{E}\!\left[\, \mathrm{SR}(\pi) \,\right] \;-\; \lambda \cdot \mathbb{E}\!\left[\, N_{\mathcal{G}}(\pi) \,\right],
\end{equation}
where $\mathrm{SR}(\pi)$ denotes task success rate, $N_{\mathcal{G}}(\pi)$ the number of generalist calls per rollout, and $\lambda$ trades efficiency against accuracy. Casting calling as an explicit decision problem subsumes the static-frequency setting (where $\pi$ deterministically calls every $\Delta$ steps) and exposes the headroom available to scene-conditioned invocation policies.

Realizing $\pi^\star$ requires meeting two preconditions surfaced in Section~\ref{sec:dual_system_formulation}. The architecture must remain interval-flexible so that adaptive calling does not inherit the collapse, and the invocation signal must be computable online from quantities already produced at inference time so that the trigger itself does not erode the throughput advantage of the dual-system design.

\section{Method}
\label{sec:method}

\subsection{Architecture}
\label{sec:method_arch}

\begin{figure}[t]
\centering
 \includegraphics[width=0.99\linewidth, ]{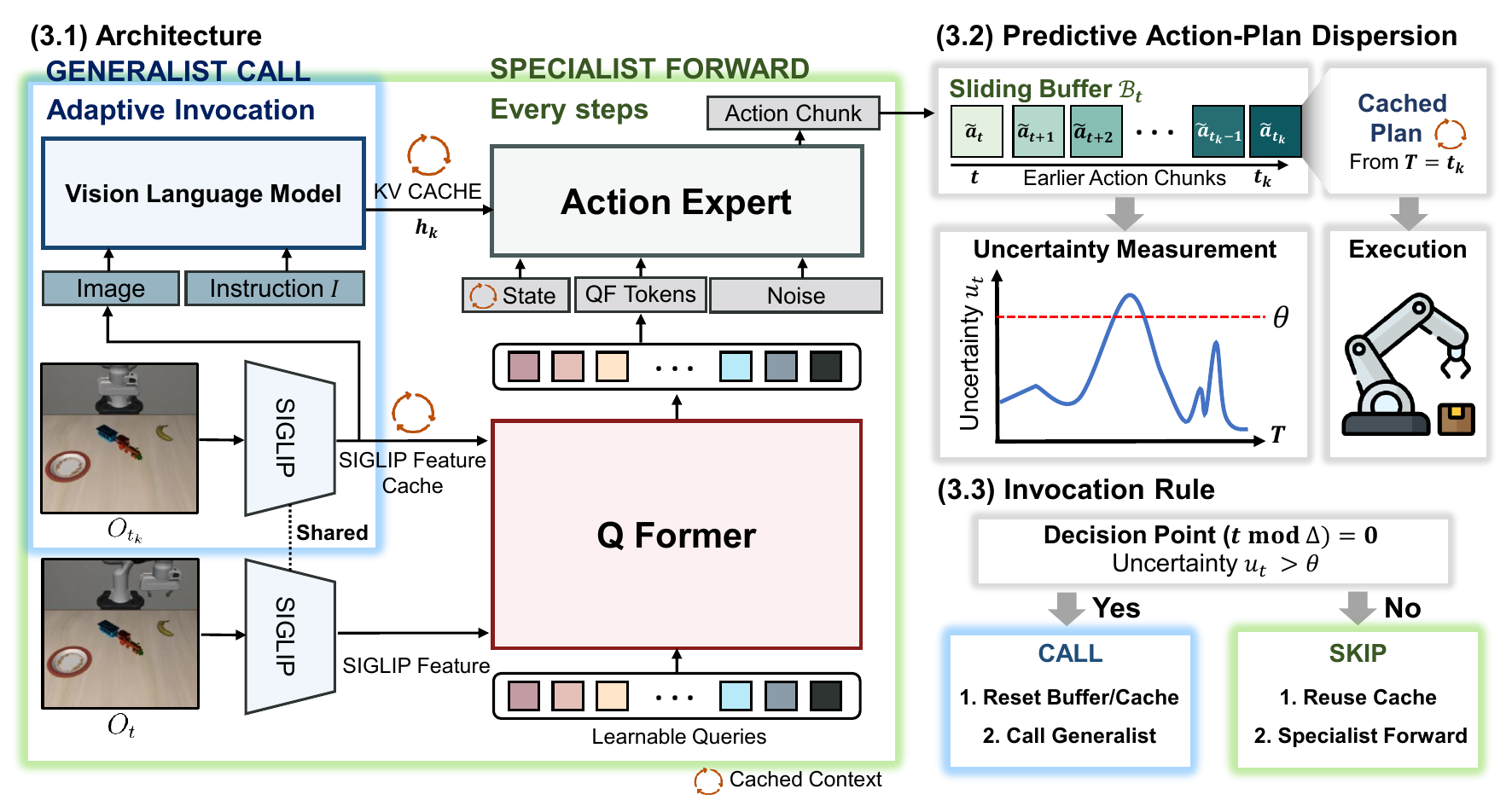}
\caption{\textbf{Overview of TUD.} The generalist call refreshes the cached context $h_k$ at decision points, the specialist forward re-predicts an action chunk against $h_k$ at every step, and a sliding buffer of these re-predictions yields the predictive action-plan dispersion $u_t$. The invocation rule thresholds the running mean of $u_t$ to skip or invoke the generalist.}
\label{fig:tud_framework}
\end{figure}

Section~\ref{sec:dual_system_formulation} surfaces two preconditions for adaptive calling, an interval-flexible architecture and a cheap online staleness signal. We meet the first with a dual-system architecture that inherits its individual modules from $\pi_0$~\cite{VLA_not_black2024pi0visionlanguageactionflowmodel}. $\pi_0$ is itself a single-system: it recomputes every stage at each control step, so the operation of the dual-system is not part of its native design. We insert a Q-Former~\cite{li2023blip} bridge, a small set of learnable query tokens that cross-attend to current and cached visual features, between the vision-language module and the action expert. The bridge lets the action expert react to the current visual frame while reading the cached language-image tokens of an earlier boundary, and this split is what gives the architecture its dual-system structure.

The observation at step $t$ is $O_t$, comprising a current visual frame and a proprioceptive state $s_t$. The dual-system split assigns the heavy Vision-Language Module (VLM, PaliGemma~\cite{PaliGemma_beyer2024paligemma} backbone) to the generalist and the Q-Former bridge plus the flow-matching action expert to the specialist. The two operational modes correspond to two distinct calls (Figure~\ref{fig:tud_framework}).

\textbf{Generalist call.} At a decision point $t_k$, the generalist takes the instruction $I$ and the current observation $O_{t_k}$, encodes the visual frame with SigLIP~\cite{SigLIP_zhai2023sigmoid}, embeds proprioception, language, and image as PaliGemma prefix tokens, and runs the LLM forward to produce a language-image key-value cache. The LLM cache together with the SigLIP feature constitute the cached context
\begin{equation}
    h_k \;=\; \mathcal{G}(I,\, O_{t_k}),
    \label{eq:generalist_call}
\end{equation}
the only state passed forward across calls. This is the heavy stage of the architecture and runs only at decision points.

\textbf{Specialist forward.} For any $t \in [t_k, t_{k+1})$, the specialist re-encodes the latest visual frame with SigLIP, runs the Q-Former bridge against the current and the cached SigLIP features to produce a fixed number of conditioning tokens, and runs the flow-matching action expert on the proprioceptive state $s_t$, the bridge tokens, and a flow-matching noise vector while reading the cached language-image tokens through cross-attention,
\begin{equation}
    \tilde a_t \;=\; \mathcal{S}(O_t,\, h_k) \;\in\; \mathbb{R}^{C \times d_a}.
    \label{eq:specialist_repred}
\end{equation}
We use the tilde to mark $\tilde a_t$ as a chunk-level prediction (the full $C$-slot chunk that the specialist outputs at step $t$), distinct from the single-step robot action $a_t \in \mathbb{R}^{d_a}$ of Section~\ref{sec:dual_system_formulation}. None of the language-image VLM tokens are recomputed.

The action chunk committed to the actuator at boundary $t_k$ is $\tilde a_{t_k}$, the first specialist forward against the refreshed cache. The actuator advances through this chunk one slot per step until the next refresh. At intermediate steps $t > t_k$, the specialist re-predicts the action chunk from the latest observation and cached context. These re-predictions are collected to measure how the predicted action plan evolves as new observations arrive and are used to compute the predictive action-plan dispersion in Section~\ref{sec:method_trigger}. By freezing the cached language-image encoding while letting the bridge and action decoder respond to the current visual observation, a single $h_k$ remains serviceable across a range of intervals rather than being pinned to a single trained $\Delta$. Architectural details of the bridge, the flow-matching schedule, and the training recipe are in Appendix~\ref{appendix:arch}.

\subsection{Predictive Action-Plan Dispersion}
\label{sec:method_trigger}

To decide whether $h_k$ has gone stale at a decision point, we examine how the specialist's re-predictions evolve from step to step. Intuitively, when the cached anchor still describes the current scene, successive re-predictions agree on the action assigned to a given future slot of the chunk. When scene drift has invalidated the anchor, that slot is re-planned differently each step. Our signal turns this stability into a single scalar.

We maintain a sliding buffer of the last $E$ specialist re-predictions
\begin{equation}
    \mathcal{B}_t \;=\; \big\{\, \tilde a_{t-i} \;:\; i = 0, \dots, E-1 \,\big\}
    \;\subset\; \mathbb{R}^{C \times d_a},
\end{equation}
reset whenever a generalist call refreshes $h_k$. Each $\tilde a_{t-i}$ is the specialist forward of Eq.~\ref{eq:specialist_repred} recorded into $\mathcal{B}_t$. We compute these re-predictions at a sparse interval $\Delta_s = 2$, with the corresponding per-call latency reported in Section~\ref{sec:exp_efficiency}.

Let $j(t) = \min(t - t_k,\, C - 1)$ denote the slot of the cached chunk that step $t$ corresponds to. Each chunk in the buffer is anchored at the same generalist call $t_k$, so reading slot $j(t)$ from every buffer entry yields $E$ alternative predictions of the action at that upcoming slot, made at different recent times $t-i$ from different observations. The predictive action-plan dispersion at $t$ averages the per-coordinate standard deviation of these buffer entries over the executed action dimensions,
\begin{equation}
    u_t \;=\; \frac{1}{d_a} \sum_{d=1}^{d_a} \mathrm{std}_{i=0,\dots,E-1}\!\Big(\, \tilde a_{t-i,\, j(t),\, d} \,\Big).
    \label{eq:uncertainty}
\end{equation}
Small $u_t$ means recent re-predictions of the upcoming chunk slots agree, so the cached plan is consistent with the current scene. Large $u_t$ means the specialist is changing its mind from step to step, indicating that the anchor no longer fits.

\subsection{Invocation Rule}
\label{sec:method_rule}

We want the generalist to be re-invoked precisely when the cached plan would diverge meaningfully from a fresh re-prediction at the upcoming slot. Under a mild Lipschitz assumption on the specialist, this deviation can be upper-bounded by two terms: a dispersion term proportional to the observable signal $u_t$, and a drift term that accounts for observation changes since the last generalist call.

\begin{proposition}[Action-deviation bound under Lipschitz specialist]
\label{prop:skip_safety}
Suppose the specialist forward $\mathcal{S}(\cdot, h_k)$ is $L_{\mathcal{S}}$-Lipschitz. Then for any $t \in [t_k, t_{k+1})$,
\begin{equation}
    \big\| \tilde a_{t_k}[j(t)] - \tilde a_t[j(t)] \big\|_2
    \;\le\;
    \alpha_E \, u_t
    \;+\;
    \beta \, L_{\mathcal{S}} \, \rho_{t,k},
    \label{eq:skip_safety}
\end{equation}
where $\alpha_E = 2 d_a \sqrt{E-1}$, $\beta \in \{0, 1\}$ depending on whether $t - t_k \le E$, and $\rho_{t,k} = \max_{t' \in [t_k, t]} \|O_{t'} - O_{t_k}\|$ is the maximal observation drift since the last generalist call.
\end{proposition}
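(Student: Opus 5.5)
The plan is to split by the case $\beta$, i.e.\ by whether the anchor chunk $\tilde a_{t_k}$ still lies in the buffer $\mathcal{B}_t$. Write $j=j(t)$ throughout, and let $x_i = \tilde a_{t-i}[j]\in\mathbb{R}^{d_a}$ for $i=0,\dots,E-1$ denote the buffer entries read at slot $j$.

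\emph{Case $\beta=0$, with $t-t_k\le E$.} Since the buffer is reset at $t_k$, it contains only forwards made at times in $[t_k,t]$. When $t-t_k\le E-1$, the anchor $\tilde a_{t_k}$ is itself one of the entries; the boundary value $t-t_k=E$ is absorbed by adjusting the indexing convention, which is harmless for the constant. Both $\tilde a_{t_k}[j]$ and $\tilde a_t[j]$ are then entries $x_p,x_q$ of the same buffer, so it suffices to bound any pairwise distance by the dispersion. I will work coordinatewise. For coordinate $d$, let $\sigma_d$ be the population standard deviation of $(x_{i,d})_i$ and $\mu_d$ its mean. Each deviation satisfies $|x_{i,d}-\mu_d|\le\sqrt{E}\,\sigma_d$. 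The sharper bound $\sqrt{E-1}\,\sigma_d$ also holds, by the Samuelson inequality, because the other $E-1$ deviations sum to the negative of this one and Cauchy--Schwarz applies. Hence
\[
|x_{p,d}-x_{q,d}|\le 2\sqrt{E-1}\,\sigma_d .
\]
Then I will pass from the $\ell_2$ norm over coordinates to the $\ell_1$ norm, using $\|v\|_2\le\|v\|_1$:
\[
\|x_p-x_q\|_2\le\sum_d|x_{p,d}-x_{q,d}|\le 2\sqrt{E-1}\sum_d\sigma_d = 2\sqrt{E-1}\,d_a\,u_t=\alpha_E u_t .
\]
This explains the factor $d_a$ in $\alpha_E$.

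\emph{Case $\beta=1$, with $t-t_k>E$.} The anchor is no longer buffered, so I pass through the oldest buffered entry $\tilde a_{t-E+1}$ using the triangle inequality:
\[
\|\tilde a_{t_k}[j]-\tilde a_t[j]\|\le\|\tilde a_{t_k}[j]-\tilde a_{t-E+1}[j]\|+\|\tilde a_{t-E+1}[j]-\tilde a_t[j]\|.
\]
The second term is a within-buffer pair, so the argument of the first case bounds it by $\alpha_E u_t$. For the first term I use that both chunks are $\mathcal{S}(\cdot,h_k)$ evaluated at $O_{t_k}$ and at $O_{t-E+1}$ respectively. Slot extraction is $1$-Lipschitz, so Lipschitzness gives the bound $L_{\mathcal{S}}\|O_{t-E+1}-O_{t_k}\|\le L_{\mathcal{S}}\rho_{t,k}$, because $t-E+1\in[t_k,t]$. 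Summing the two terms gives the claim.

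\emph{Expected obstacle.} The main difficulty is making the specialist deterministic enough for Lipschitzness to apply. $\mathcal{S}$ also consumes flow-matching noise, so I will assume fixed or shared noise across forwards, or treat the noise as part of the input held fixed. The same convention has to be used for the buffer entries in the first case. A related bookkeeping point is the sparse interval $\Delta_s=2$, which means the buffer indices are really $t-i\Delta_s$. The argument only uses that all entries are forwards against $h_k$ at observations within $[t_k,t]$, so this does not affect the bound, but I will state it explicitly.
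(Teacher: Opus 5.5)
Your proof is correct and follows the paper's argument. It uses the same split on whether the anchor $\tilde a_{t_k}$ is still buffered, the same $\sqrt{E-1}$ deviation-from-mean bound combined with $\|\cdot\|_2\le\|\cdot\|_1$ to produce the factor $d_a$, and the same detour through the oldest buffered entry $\tilde a_{t-(E-1)}$ using Lipschitzness of $\mathcal{S}(\cdot,h_k)$. (The paper reads $\mathrm{std}$ as the sample standard deviation, so $\sqrt{E-1}$ follows directly from $\sum_i (y_{i,d}-\bar y_d)^2=(E-1)s_d^2$; your Samuelson step makes the bound hold under either normalization.)

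Your caution at $t-t_k=E$ is justified, but "adjusting the indexing convention" is too vague. At that step the anchor has already been evicted, and the $\beta=0$ bound can genuinely fail: if all buffered entries agree, then $u_t=0$ while the anchor differs. You should state explicitly that this case belongs to the $\beta=1$ branch, where your second-case argument covers it. The paper's own proof glosses over this off-by-one.
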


The dispersion term $\alpha_E u_t$ reflects model uncertainty inside the buffer, while the drift term $\beta L_{\mathcal{S}} \rho_{t,k}$ accounts for observation changes once the anchor $\tilde a_{t_k}$ is no longer in the buffer. A complete proof and an empirical bound check are provided in Appendix~\ref{appendix:proof_skip_safety} and validation of $u_t$ as an invocation signal is reported in Section~\ref{sec:exp_signal}.

We instantiate the invocation policy $\pi$ of Section~\ref{sec:adaptive_calling} with a conservative call-or-skip rule that operationalizes this bound. The threshold $\theta_p$ on the running mean $\bar u$ controls the dispersion term ($\alpha_E u_t$), and the cap $\Delta_{\max}$ on the number of steps without a generalist call limits the drift term ($L_{\mathcal{S}} \rho_{t,k}$). At every decision point, the rule defers the generalist call when $\bar u$ has stayed below $\theta_p$ and the number of steps since the last generalist call has not yet reached $\Delta_{\max}$, and calls otherwise. Here $\bar u$ is the running mean of $u_t$ since the last generalist call, so each decision aggregates evidence accumulated against the current cached context rather than the instantaneous signal at the decision point. When the rule fires a \textbf{call}, the generalist refreshes $h_k$ and the actuator's committed plan $\tilde a_{t_k}$ is recomputed as the first specialist forward against the new cache; on a \textbf{skip}, both are kept and the actuator continues to advance through the existing chunk.

\textbf{Threshold selection.} We adapt $\theta_p$ online as the empirical $p$-th percentile of recently observed $u_t$ values. The percentile $p$ controls the cost--success operating point, with larger $p$ permitting more skips and fewer generalist calls. We denote settings by their percentile (e.g., \emph{p70}) and use p70 unless otherwise stated. Further details of the invocation rule, including online threshold adaptation and hyperparameter sensitivity analyses, are provided in Appendix~\ref{appendix:invo_rule}.

\section{Experiments}
\label{sec:experiments}
\subsection{Experimental Setup}
\label{sec:exp_setup}

\textbf{Backbone and training.} Our backbone is the flow-matching $\pi_0$~\cite{VLA_not_black2024pi0visionlanguageactionflowmodel} Vision-Language-Action policy with a Q-Former bridge between the vision-language module and the action expert (architecture in Appendix~\ref{appendix:arch}). We fine-tune from the released $\pi_0$ checkpoint with batch size $32$ on $4 \times$ NVIDIA H200 GPUs for $30{,}000$ steps. Action chunks have horizon $C = 50$ with $d_a = 7$ executed dimensions.

\textbf{Benchmarks and evaluation.} We evaluate on VLA-Arena~\cite{zhang2025vla}, a benchmark spanning four diverse domains (Safety, Distractor, Extrapolation, Long Horizon), comprising $11$ task suites in total, across two difficulty levels (L0, L1) that probe both nominal performance and generalization. We report L0 results in this section and L1 (generalization) results in Appendix~\ref{appendix:vla_arena_l1}. Primary metrics are Success Rate (SR) and Cumulative Cost (CC). Per-suite descriptions and the full evaluation procedure are in Appendix~\ref{appendix:benchmark_details}. Cross-benchmark validation on CALVIN ABC$\to$D~\cite{mees2022calvin} is reported in Appendix~\ref{appendix:calvin_full}.

\textbf{Real robot platform.}
We conduct real-robot experiments using a 6-DoF SO-101 single arm and two RGB cameras providing third-person and gripper views.
The task instructions used in the experiments are \textit{``Put the pen into the white bucket''}, \textit{``Close the bottom drawer''}, and \textit{``Stack the small cube on the big cube''}.
We collect 100 teleoperated demonstrations per task with varied object configurations and use them for training, and report the average performance over a total of 10 rollouts across two configurations per task at test time.
Unlike the simulation experiments, training is conducted on a single H200 GPU, while evaluation is performed on a single TITAN RTX GPU.
The full protocol, hardware, and test settings are reported in Appendix~\ref{appendix:real_robot}.

\begin{table*}[!t]
\centering
\vspace{-5pt}
\caption{Per-task success rate (SR) and cumulative cost (CC) on VLA-Arena Level 0. \emph{p70} reports TUD at our representative operating point ($p = 0.70$), and \emph{Best} reports TUD at whichever $p$ gives the highest result for each individual task.}
\vspace{-5pt}
\label{tab:vla_l0_with_ours}
\small
\setlength{\tabcolsep}{5pt}
\renewcommand{\arraystretch}{1.10}
\resizebox{\textwidth}{!}{
\begin{tabular}{lcccccccc>{\columncolor[HTML]{D5E8D4}}c>{\columncolor[HTML]{D5E8D4}}c}
\toprule
 & \multicolumn{8}{c}{\textbf{Baselines}} & \multicolumn{2}{c}{\textbf{Ours (TUD)}} \\
\cmidrule(lr){2-9} \cmidrule(lr){10-11}
\textbf{Task} & \textbf{OpenVLA} & \textbf{OpenVLA-OFT} & \boldmath$\pi_0$ & \boldmath$\pi_0$\textbf{-FAST} & \textbf{UniVLA} & \textbf{SmolVLA} & \textbf{OneTwoVLA} & \textbf{RoboDual} & \textbf{p70} & \textbf{Best}\\
\midrule
\rowcolor[HTML]{EFEFEF}
\multicolumn{11}{l}{\textbf{Safety (SR / CC)}} \\
StaticObstacles   & 0.60 / 0.0 & 1.00 / 0.0 & 0.98 / 0.0 & 1.00 / 0.0 & 0.84 / 0.0 & 0.14 / 0.0 & 0.94 / 0.0 & 0.68 / 0.0 & 0.97 / 0.0 & 0.99 / 0.0\\
CautiousGrasp     & 0.80 / 6.6 & 0.60 / 3.3 & 0.84 / 3.5 & 0.64 / 3.3 & 0.80 / 3.3 & 0.52 / 2.5 & 0.72 / 2.5 & 0.42 / 2.2 & 0.93 / 3.2 & 0.93 / 3.2\\
HazardAvoidance   & 0.20 / 17.2 & 0.36 / 9.4 & 0.74 / 6.4 & 0.10 / 10.4 & 0.70 / 5.3 & 0.16 / 10.4 & 0.56 / 6.7 & 0.68 / 5.3 & 0.85 / 5.5 & 0.85 / 5.5 \\
StatePreservation & 1.00 / 0.0 & 1.00 / 0.0 & 0.98 / 0.0 & 0.60 / 0.0 & 0.90 / 0.0 & 0.50 / 0.0 & 0.84 / 0.0 & 0.82 / 0.0 & 0.95 / 0.0 & 0.97 / 0.0\\
DynamicObstacles  & 0.60 / 3.6 & 0.80 / 8.8 & 0.92 / 6.0 & 0.80 / 3.6 & 0.26 / 7.1 & 0.32 / 2.1 & 0.76 / 6.6 & 0.32 / 3.0 & 0.89 / 7.9 & 0.93 / 7.0 \\
\midrule
\rowcolor[HTML]{EFEFEF}
\multicolumn{11}{l}{\textbf{Distractor (SR)}} \\
StaticDistractors  & 0.80 & 1.00 & 0.92 & 1.00 & 1.00 & 0.54 & 0.96 & 0.64 & 0.99 & 0.99 \\
DynamicDistractors & 0.60 & 1.00 & 0.78 & 0.80 & 0.78 & 0.42 & 0.94 & 0.52 & 0.90 & 0.97\\
\midrule
\rowcolor[HTML]{EFEFEF}
\multicolumn{11}{l}{\textbf{Extrapolation (SR)}} \\
PrepositionCombinations & 0.68 & 0.62 & 0.76 & 0.14 & 0.50 & 0.20 & 0.64 & 0.20 & 0.71 & 0.81\\
TaskWorkflows           & 0.82 & 0.74 & 0.72 & 0.24 & 0.76 & 0.32 & 0.48 & 0.42 & 0.52 & 0.66 \\
UnseenObjects           & 0.80 & 0.60 & 0.80 & 0.00 & 0.34 & 0.16 & 0.58 & 0.30 & 0.86 & 0.86 \\
\midrule
\rowcolor[HTML]{EFEFEF}
\multicolumn{11}{l}{\textbf{Long Horizon (SR)}} \\
LongHorizon & 0.80 & 0.80 & 0.92 & 0.62 & 0.66 & 0.74 & 0.90 & 0.77 & 0.97 & 0.99\\
\bottomrule
\end{tabular}
}
\vspace{-5pt}
\end{table*}

\begin{figure*}[!t]
\vspace{-0.4\baselineskip}
    \centering
    \caption{Cost--success trade-off on VLA-Arena. \textbf{Left}: Safety. \textbf{Right}: Long-Horizon. \textbf{Top}: TUD's operating curve as the percentile $p$ varies (each marker = one $p$, e.g., p70 for $p = 0.70$). \textbf{Bottom}: each non-adaptive baseline at its single operating point.}

    \begin{subfigure}[b]{0.48\textwidth}
        \centering
        \includegraphics[width=\textwidth]{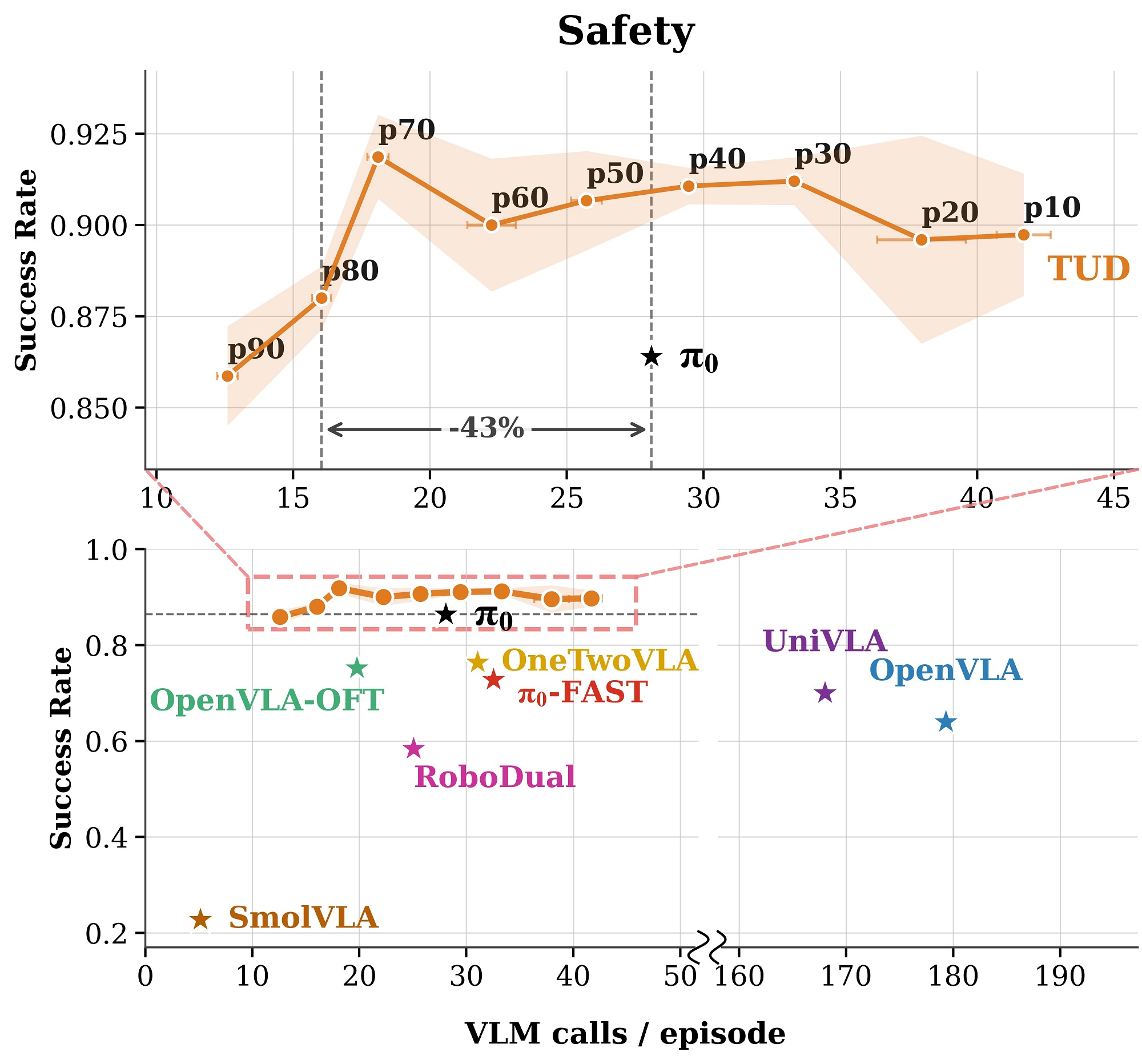}
        \label{fig:cs_safety}
    \end{subfigure}
    \begin{subfigure}[b]{0.48\textwidth}
        \centering
        \includegraphics[width=\textwidth]{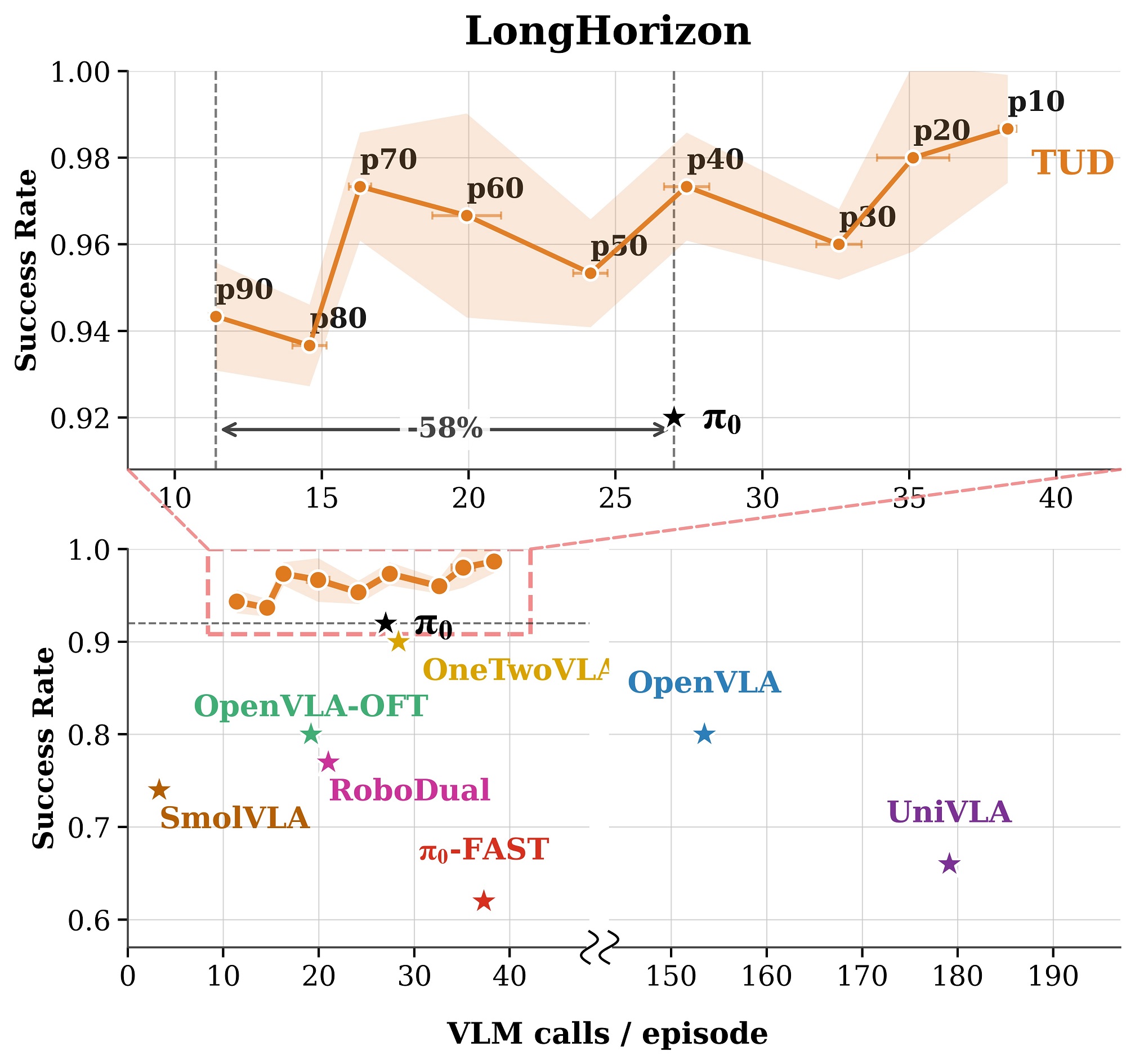}
        \label{fig:cs_longhorizon}
    \end{subfigure}
    \label{fig:cost_success}
\vspace{-0.7\baselineskip}
\end{figure*}

\subsection{Cost--Success Trade-off}
\label{sec:exp_efficiency}

TUD's contribution is a method for adapting the generalist call frequency at inference time. Different VLAs use different backbones, vision-language model sizes, and training recipes. Stronger choices on any of these axes would shift the absolute success rate higher, and we are not claiming a comprehensive SOTA comparison. We instead ask an orthogonal question: \emph{given a dual-system architecture, can we trace a better trade-off between task success rate and VLM-call budget than any non-adaptive baseline?} Existing VLAs answer only at single operating points. UniVLA~\cite{bu2025univla} and OpenVLA~\cite{VLA_Archi_kim2024openvla} invoke the VLM at nearly every step, paying a call cost that is generally unsuitable for real-time control. OpenVLA-OFT~\cite{kim2025fine}, $\pi_0$~\cite{VLA_not_black2024pi0visionlanguageactionflowmodel}, $\pi_0$-FAST~\cite{pertsch2025fast}, SmolVLA~\cite{shukor1844smolvla} and OneTwoVLA~\cite{VLA_Dual_lin2025onetwovla} amortize the call over multiple steps but each commits to a single setting with no knob for trading success rate against fewer calls. RoboDual~\cite{VLA_Dual_bu2024towards} uses a dual-system architecture but invokes the generalist on a fixed schedule. TUD exposes a tunable percentile $p \in [0, 1]$ that controls the percentile threshold of $u_t$ (Section~\ref{sec:method_rule}). Each $p$ yields one operating point, and sweeping $p$ traces an entire curve. We refer to operating points by their percentile setting, for example \emph{p70} for $p = 0.70$.

\textbf{Cost--success curve and per-task accuracy.} Figure~\ref{fig:cost_success} sweeps $p$ on the Safety and Long-Horizon domains and overlays each non-adaptive baseline at its operating point. We focus on the comparison against $\pi_0$, the source from which our backbone is initialized. On Safety, TUD at p70 matches $\pi_0$'s success rate with \textbf{43\% fewer} calls. On Long-Horizon, TUD maintains a high success rate while invoking the VLM \textbf{58\% fewer} times than $\pi_0$. Additional cost-success curves on VLA-Arena L0 are provided in Appendix~\ref{appendix:vla_arena_l0}. The per-task L0 results in Table~\ref{tab:vla_l0_with_ours} confirm that TUD at p70 stays in the competitive regime across all four domains, and the per-task best percentile further improves on every task. L1 generalization results in Appendix~\ref{appendix:vla_arena_l1} show this carries over to harder splits, and CALVIN ABC$\to$D results in Appendix~\ref{appendix:calvin_full} yield consistent findings.

\begin{wraptable}{r}{0.30\linewidth}
\vspace{-12pt}
\centering
\caption{Per-call latency.}
\vspace{-0.7\baselineskip}
\label{tab:tud_latency}
\small
\setlength{\tabcolsep}{4pt}
\renewcommand{\arraystretch}{1.05}
\begin{tabular}{lcc}
\toprule
Forward & ms $\downarrow$ & Hz $\uparrow$ \\
\midrule
Generalist & $105.91$ & $9.44$ \\
Specialist & $\phantom{0}13.53$ & $73.91$ \\
\bottomrule
\end{tabular}
\vspace{-8pt}
\end{wraptable}

\textbf{Per-call latency.} On a single H200 GPU, averaging $30$ repetitions per VLA-Arena suite ($11$ suites, $3$ seeds), a specialist forward costs roughly $1/7.8$ of a generalist call (Table~\ref{tab:tud_latency}), so each saved generalist call converts near-proportionally into wall-clock savings. The curve gain in Figure~\ref{fig:cost_success} therefore translates directly into improved real-time efficiency at matched accuracy.

\subsection{Signal Validation}
\label{sec:exp_signal}

Proposition~\ref{prop:skip_safety} ties the skip-safety bound to the observable dispersion signal $u_t$. We validate $u_t$ as an invocation signal for adaptive generalist calling from three perspectives: its effectiveness under matched call budgets, its separation between successful and failed rollouts, and its temporal dynamics within individual rollouts.

\begin{wraptable}{r}{0.45\textwidth}
\vspace{-1.0\baselineskip}
\centering
\caption{VLM re-invocation criteria at matched call budgets on VLA-Arena L0.}
\label{tab:reinvocation-criteria}

\scriptsize
\begin{tabular*}{\linewidth}{@{\extracolsep{\fill}}lccc@{}}
\toprule
\textbf{Method} &
\textbf{Calls} &
\textbf{SR (\%)} $\uparrow$ &
\textbf{Wall (s)} $\downarrow$ \\
\midrule

\multicolumn{4}{l}{\emph{High budget ($\approx$17--18 calls/ep)}} \\
\cmidrule(lr){1-4}
Diff-DAgger  & 16.5 & 83.7 & 31.1 \\
BayesDiff    & 16.3 & 83.5 & 11.9 \\
Visual       & 17.2 & 83.0 & 11.8 \\
AAC-ent.     & 16.4 & 82.7 & 28.7 \\
MC-AleaUQ    & 19.1 & 80.2 & 11.6 \\
GenUQ        & 17.3 & 74.7 & 29.7 \\
\textbf{TUD} & 17.8 & \textbf{86.7} & \textbf{9.5} \\

\midrule
\multicolumn{4}{l}{\emph{Low budget ($\approx$8--10 calls/ep)}} \\
\cmidrule(lr){1-4}
Diff-DAgger  & 8.3 & 80.7 & 31.0 \\
BayesDiff    & 6.8 & 77.0 & 11.8 \\
Visual       & 9.3 & 79.8 & 11.3 \\
AAC-ent.     & 10.1 & 77.2 & 27.7 \\
MC-AleaUQ    & 8.3 & 73.5 & 10.8 \\
GenUQ        & 8.9 & 73.8 & 30.4 \\
\textbf{TUD} & 9.0 & \textbf{82.8} & \textbf{7.9} \\

\bottomrule
\end{tabular*}
\vspace{-1.0\baselineskip}
\end{wraptable}

\textbf{Invocation signal comparison at matched call budgets.}
We first evaluate whether $u_t$ identifies when re-invocation is most useful. Under matched generalist-call budgets, all methods use the same invocation rule and differ only in the uncertainty signal driving the call-or-skip decision. We compare against a broad set of alternative criteria: rollout flow-matching loss (Diff-DAgger~\cite{Uncertainty_Dagger_lee2025diff-dagger}), Bayesian diffusion uncertainty (BayesDiff~\cite{kou2024bayesdiff}), visual-feature cosine distance (Visual), action entropy adapted from AAC~\cite{Uncertainty_VLA_ACC_Liang_2026_CVPR} (AAC-ent.), intra-step Monte-Carlo variance (MC-AleaUQ~\cite{de2025diffusion}), and posterior-ensemble dispersion (GenUQ~\cite{jazbec2025generative}). Details of each signal and its adaptation to our specialist forward $\mathcal{S}(O_t,h_k)$ are provided in Appendix~\ref{appendix:baseline_uq}. As shown in Table~\ref{tab:reinvocation-criteria}, TUD achieves the highest SR under both call-budget regimes, reaching 86.7\% at 17.8 calls and 82.8\% at 9.0 calls per episode. Moreover, $u_t$ is computed directly from specialist action predictions already available in the dual-system, requiring neither an auxiliary uncertainty model nor repeated stochastic sampling. This keeps the wall-clock overhead low compared with sampling-based alternatives. The consistent advantage at matched call budgets indicates that predictive action-plan dispersion more effectively identifies when the generalist should be invoked.

\begin{figure}[!t]
    \centering
    \vspace{-8pt}
    \includegraphics[width=\linewidth]{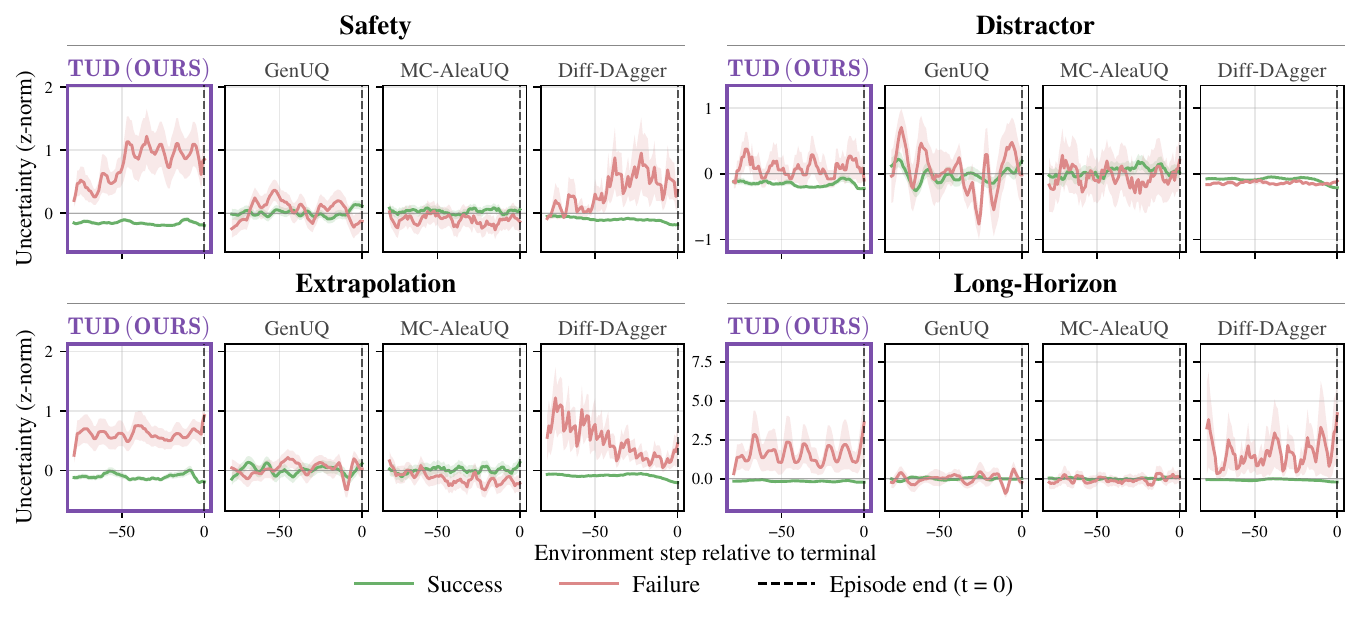}
    \caption{\textbf{Uncertainty signals on success vs failure rollouts.} For each VLA-Arena domain (one block per domain, one sub-panel per signal), we plot each signal in the final 80 steps before the rollout ends, separately for successful (green) and failed (red) episodes. Signals are $z$-normalized per method. Only $u_t$ shows a consistent gap between success and failure across all four domains.}
    \label{fig:uncertainty_compare_per_domain}
    \vspace{-0.7\baselineskip}
\end{figure}
\textbf{Failure-vs-success separation.} We next examine whether $u_t$ distinguishes rollouts that eventually succeed from those that fail. For this analysis, we focus on three representative uncertainty estimators: GenUQ, MC-AleaUQ, and Diff-DAgger. For each VLA-Arena domain, we pool L0
rollouts and compare each signal over the final episode steps for successful and failed rollouts (Figure~\ref{fig:uncertainty_compare_per_domain}). A signal whose success and failure curves separate is informative for the invocation rule, since the rule can then use it to call the generalist when the rollout is on a path toward failure. Figure~\ref{fig:uncertainty_compare_per_domain} shows that $u_t$ is the only signal that separates success from failure consistently across all four domains: the failure curve rises sharply in the final $\sim\!20$ steps while the success curve stays flat. GenUQ and MC-AleaUQ show no consistent gap in any domain, and Diff-DAgger collapses on Distractor.

\textbf{Per-rollout dynamics.}
Beyond average-trend separation across rollouts, we ask whether $u_t$ also tracks moment-to-moment difficulty within a single rollout (Figure~\ref{fig:scenario_uncertainty}). For the long-horizon task \textit{``Open the top layer of the cabinet''}, $u_t$ remains low while approaching the cabinet, rises when locating the top layer, and peaks during the handle grasping phase. This shows that $u_t$ increases at moments requiring more precise manipulation or reasoning, where a fresh generalist call is most useful. Additional per-rollout examples across other domains are provided in Appendix~\ref{appendix:uncertainty_temporal_dynamics}.

\begin{figure}[!t]
    \centering
    \vspace{-5pt}
    \includegraphics[width=0.95\linewidth]{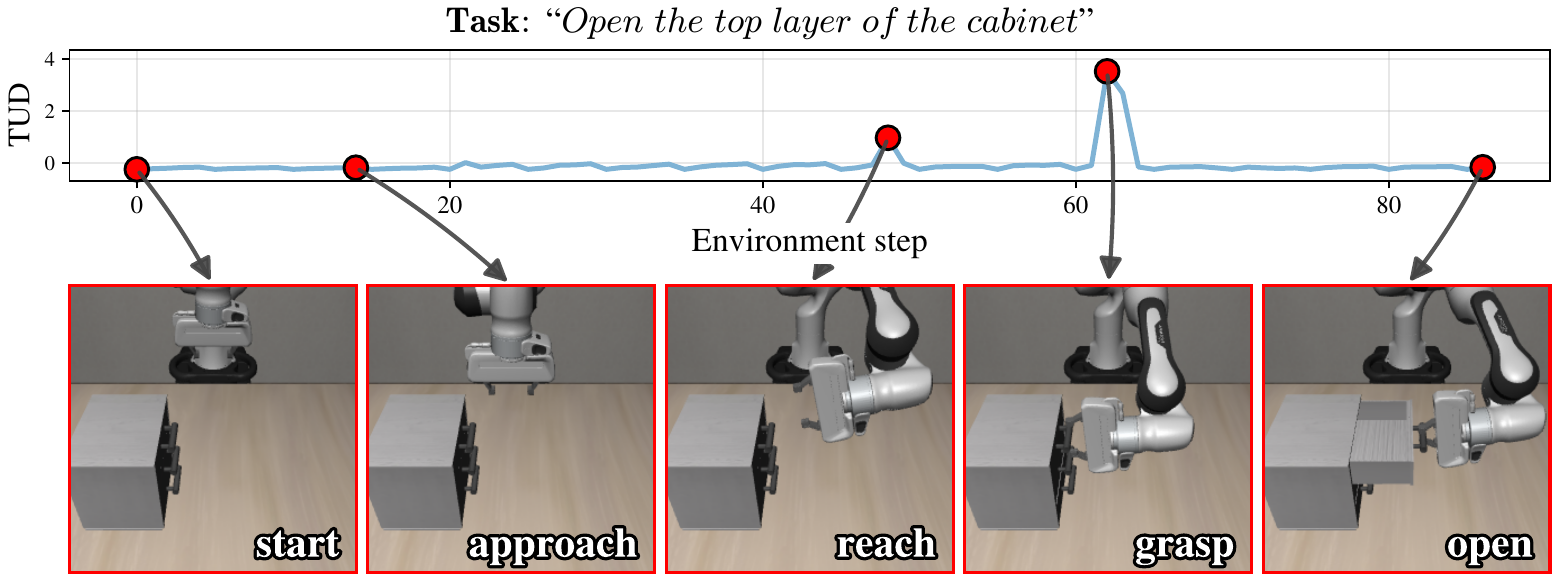}
    \caption{\textbf{Tracking $u_t$ within a representative rollout.}  Long-horizon episode for \textit{``Open the top layer of the cabinet''}. $x$-axis: environment time within the rollout. $y$-axis: $z$-normalized $u_t$ (same normalization as Figure~\ref{fig:uncertainty_compare_per_domain}). Red circles mark the highest-$u_t$ peaks plus the two rollout endpoints, with the corresponding RGB frames shown in time order below.}
    \label{fig:scenario_uncertainty}
    \vspace{-5pt}
\end{figure}

\subsection{Real Robot Experiments}
\label{sec:exp_real_robot}

\begin{wraptable}{r}{0.46\linewidth}
\vspace{-1.0\baselineskip}
\centering
\caption{\textbf{Real robot results.} SR over $30$ rollouts per method ($10$ per task). Calls and Wall (s) are per-episode generalist invocations and execution time; failures count as $120$\,s.}
\label{tab:real_robot_main_body}
\vspace{-0.6\baselineskip}
\small
\setlength{\tabcolsep}{5pt}
\renewcommand{\arraystretch}{1.05}
\begin{tabular}{lccc}
\toprule
\textbf{Method} & \textbf{SR} $\uparrow$ & \textbf{Calls} $\downarrow$ & \textbf{Wall (s)} $\downarrow$ \\
\midrule
Single-system $\pi_0$         & 0.433 & 49.0 & 79.5 \\
\midrule
Fixed $\Delta=9$      & 0.467 & 48.5 & 76.4 \\
Fixed $\Delta=25$     & 0.433 & 34.6 & 76.1 \\
Fixed $\Delta=50$     & 0.400 & 25.9 & 79.2 \\
\midrule
TUD p50          & 0.767 & 24.5 & 53.4 \\
TUD p70          & 0.733 & 20.8 & 53.5 \\
\rowcolor[HTML]{D5E8D4}
\textbf{TUD p90} & \textbf{0.833} & \textbf{12.1} & \textbf{38.5} \\
\bottomrule
\end{tabular}
\vspace{-1.0\baselineskip}
\end{wraptable}

We conduct real-robot experiments to verify whether the same favorable cost--success trade-off also appears in the real-robot setting. In Table~\ref{tab:real_robot_main_body} we compare single-system $\pi_0$ and the fixed-interval calling strategy of our $\pi_0$+Q-Former ($\Delta = 9, 25, 50$) with TUD at p50/p70/p90.
Averaged over the three tasks, all three TUD operating points achieve a higher success rate than every baseline, while issuing fewer generalist calls than any of them. Against the strongest fixed-interval baseline, $\Delta = 9$, TUD p90 raises the success rate from $0.467$ to $0.833$ while reducing generalist calls from $48.5$ to $12.1$ and wall-clock time from $76.4$\,s to $38.5$\,s. As another observation, simply tuning the fixed interval does not reach the cost--success trade-off region that TUD occupies. This is because increasing $\Delta$ from $9$ to $50$ reduces calls but also lowers the success rate from $0.467$ to $0.400$, which shows the need to decide the invocation timing adaptively. Per-task results and the end-to-end latency and control-loop jitter analyses are in Appendix~\ref{appendix:real_robot}.

\section{Conclusion}
\label{sec:conclusion}
We introduced TUD, a method for adapting the generalist call frequency at inference time in dual-system Vision-Language-Action models. TUD's signal, the predictive action-plan dispersion $u_t$, is computed entirely from existing specialist forwards with no extra training or stochastic samples, and a tunable percentile $p$ on the online-calibrated quantile of $u_t$ traces an entire operating curve in the (task success rate, VLM-call budget) plane. While existing VLAs occupy single operating points with no knob for trading accuracy against call cost, TUD finds a strictly more favorable curve on VLA-Arena and substantially reduces VLM calls at a matched success rate. The same trade-off holds on a real SO-101 arm, where the three-task average puts TUD above every baseline in success rate with fewer generalist calls. A current limitation is that $p$ controls the trade-off but does not directly optimize for a specific target operating point such as a desired latency budget or minimum success rate. Overall, our results suggest that \emph{deciding when to reason} is a key factor for practical and efficient dual-system VLA deployment.
\newpage

\clearpage
\bibliographystyle{unsrtnat}
\bibliography{main}
\clearpage
\appendix

\startcontents[supp]

\section*{Supplementary Material}

\noindent\textbf{Contents}
\vspace{-0.4em}

{\small
\setcounter{tocdepth}{2}
\printcontents[supp]{}{1}{}
}

\vspace{0.8em}

\clearpage

\section{Additional Related Work}
\label{appendix:related_work_extended}
This appendix complements the related-work pointers in the introduction with broader context across VLA architectures, dual-system designs, and uncertainty estimation for robotic policies.

\textbf{VLA models.} Beyond the architectures cited in the main paper, work on VLAs has expanded along several orthogonal axes that share TUD's emphasis on closed-loop deployability. Inference-time efficiency has been pursued through token compression and pruning~\cite{VLA_Efficient_xu2025vla, VLA_Efficient_yang2025efficientvla, VLA_Efficient_li2025cogvla, VLA_tokensmooth_zhou2025beast, li2026spvla, ni2025swiftvla}, key-value caching~\cite{TokenCompressCache_bulat2025compress, TokenCompressCache_song2025reasoning}, post-training quantization~\cite{xu2026qvla, zhang2026quantvla}, and chain-of-thought or latent reasoning that lets the model condition on intermediate plans before acting~\cite{VLA_COT_li2025coa, VLA_COT_zhao2025cot, VLA_COTandBenchmark_chen2025combatvla, VLA_longterm_COT_gu2025manualvla, bai2026latent, yang2026vlaser, zhong2026acot, peng2025counterfactual, lin2025hif}. A separate line of work fine-tunes VLAs with reinforcement learning to recover from compounding error and to satisfy task-level constraints~\cite{VLA_RL_liu2025can, VLA_RL_zhang2025safevla, VLA_RL_zhou2025autovla, VLA_RL_azzolini2025cosmos, li2026simplevlarl}. Memory, active perception, and spatial reasoning extend the standard 2D image conditioning with explicit geometric and temporal context~\cite{3d_aware_VLM_lee20253d, 3d_aware_VLA_li2025bridgevla, 3d_aware_vla_zheng2024tracevla, 3d_aware_vla_tian2025dynarend, shi2026memoryvla, liu2026activevla, xiao2025ava, feng2025spatial, xu2025affordance, qian2025geopredict, liang2026pixelvla}. Pretraining, scaling, and adaptation have explored cross-embodiment scaling~\cite{zheng2026xvla}, meta-learning for embodied tasks~\cite{li2026metavla}, action-as-language formulations that preserve VLM priors during adaptation~\cite{hancock2026actions}, latent-action pretraining from human or industrial videos~\cite{luo2026joint, zhang2025observation}, multi-skill model merging and atomic-skill composition~\cite{fu2025mergevla, zhang2026atomicvla}, and feature adaptation for viewpoint robustness~\cite{li2025vla}. Because generalist VLAs are trained on many tasks at once, they also relate to model merging, which has been used to combine motion planners trained on diverse driving datasets~\cite{lee2025interaction} and, more generally, to combine models or adapters trained on different tasks into a single model~\cite{ilharco2023editing, wortsman2022model, jeong2026preference, matena2022merging, yadav2023ties, lee2026labelfree}. Decoding-paradigm and backbone variants include unified discrete-diffusion~\cite{chen2026unified}, hybrid diffusion-autoregression~\cite{liu2026hybridvla}, video-generation backbones~\cite{shen2026videovla}, and revised vision-language formulations that revisit the role of the VLM in VLAs~\cite{zhang2026vlmvla}. Specialized deployments include autonomous driving~\cite{zhang2026cocvla, li2026drivevlaw, yuan2026autodriver, peng2025colavla, luo2026unleashing}, whole-body and bimanual manipulation~\cite{jiang2026wholebodyvla, im2026twinvla}, sim-to-real transfer~\cite{zhao2026simreal}, force-aware contact-rich control~\cite{li2026forcevla2}, and richer instruction interfaces such as speech and interleaved image-text~\cite{zhao2025vlas, fan2026interleavevla}. These directions are largely complementary to the scheduling question we study, since any of them can in principle plug into the dual-system architecture that TUD operates on.

\textbf{Dual-System VLAs.} Dual-system VLAs~\cite{VLA_Dual_zhang2024hirt} separate high-level reasoning from low-level action generation to improve control efficiency. Synchronous dual-system designs run the slow VLM in lockstep with the action expert, treating System~2's output as a per-step conditioning signal~\cite{VLA_not_black2024pi0visionlanguageactionflowmodel, VLA_Dual_yespatially, VLA_Dual_wen2025dexvla, VLA_Dual_zhouchatvla}, which cleanly preserves language grounding but inherits the VLM's per-step latency. Asynchronous variants relax this by running System~2 at a low fixed frequency while System~1 issues actions at a higher rate, and they differ mainly in what System~2 hands to System~1, ranging from conditioning tokens fed forward to a flow-matching specialist (Robodual~\cite{VLA_Dual_bu2024towards}, Fast-in-Slow~\cite{VLA_Dual_chen2025fast}), to planning tokens for a 3D-aware diffusion policy (OpenHelix~\cite{VLA_Dual_cui2025openhelix}), to latent goals refined by an action expert (Hume~\cite{VLA_Dual_song2025hume}). All of these still pin the call interval $\Delta$ at a single value chosen offline, so they sit in the design space we extend with adaptive scheduling. A single offline $\Delta$ is limiting in part because tasks differ in how often they need the generalist (Figure~\ref{fig:openhelix_freq}). This parallels multi-task learning, where tasks trained together differ in learning dynamics and gradient behavior, so treating them uniformly is suboptimal and has motivated task-adaptive loss balancing and update rules~\cite{jeong2024quantifying, chen2018gradnorm, sener2018multi, jeong2025selective, yu2020gradient, liu2021conflict, jeong2025resolving, navon2022multi}. TUD carries this view to inference time by letting the call frequency adapt to each task and to each phase of a rollout. OneTwoVLA~\cite{VLA_Dual_lin2025onetwovla} is the closest prior attempt to move beyond a fixed call interval. It learns a decision token that switches between reasoning and acting modes, supervised using manually annotated phase boundaries. This design depends on the heuristic choice of where a phase begins and may therefore limit transfer to unseen scenes.
CLOVER~\cite{bu2024closed} also re-invokes its visual planner adaptively, but the trigger is a perceptual distance against a generated subgoal image and requires training a dedicated video-diffusion subgoal generator. Adaptive Action Chunking (AAC)~\cite{Uncertainty_VLA_ACC_Liang_2026_CVPR} uses action entropy estimated from multiple sampled action chunks to adapt the length of the action chunk executed before replanning. In contrast, TUD measures predictive action-plan dispersion using forwards that the dual-system architecture already performs, and uses this signal to selectively re-invoke the generalist. This directly exposes the generalist call budget as a controllable inference-time trade-off. TUD is also distinct from test-time adaptation, which updates model parameters on unlabeled test inputs to cope with distribution shift~\cite{sun2020test, wang2021tent, jeong2025synchronizing}. TUD keeps all weights fixed and adapts only how often the generalist is invoked during a rollout.

\textbf{Uncertainty estimation for robotic policies.} Uncertainty estimation in robotic policies has been studied at both training and inference time. At training time, iterative imitation learning frameworks based on DAgger~\cite{ross2011reduction} use uncertainty to decide when to query an expert. Concrete realizations include ensemble variance over independently trained policies~\cite{Uncertainty_Dagger_menda2019ensembledagger, Uncertainty_Dagger_hoque2022thriftydagger}, MC-dropout over a single policy~\cite{Uncertainty_UAIL_cui2019uncertainty}, and diffusion-loss comparison against a training-distribution reference~\cite{Uncertainty_Dagger_lee2025diff-dagger}. At inference time, uncertainty drives failure detection~\cite{FailDetect_agia2024unpacking, Uncertainty_Diffusion_FailDetect_xu2025can, Uncertainty_FailDetect_gu2025safe} and human-in-the-loop intervention~\cite{he2025uncertainty, empleo2025safe}. These methods share a common limitation: they require auxiliary training pipelines, multiple policy models, or repeated stochastic forwards, all of which are at odds with the throughput requirements that motivate dual-system VLAs. Our work targets a signal that avoids all three by computing dispersion from forwards the architecture already runs, and Section~\ref{sec:exp_signal} empirically shows it separates success from failure on VLA-Arena more reliably than three representative prior signals from this family.

\section{Proof of Proposition~\ref{prop:skip_safety}}
\label{appendix:proof_skip_safety}

We briefly introduce the basic concepts used in the proof to keep the paper self-contained.

\noindent\textbf{Lipschitz continuity.}\ \ A map $f: \mathcal{X} \to \mathcal{Y}$ between normed spaces is $L$-Lipschitz if $\|f(x) - f(x')\| \le L \|x - x'\|$ for all $x, x' \in \mathcal{X}$. We use the Euclidean norm $\|\cdot\|_2$ on observations and action chunks throughout. The assumption is mild, and any neural network with bounded weights is Lipschitz with some finite constant.

We restate the proposition for convenience.

\noindent\textbf{Proposition~\ref{prop:skip_safety} (Action-deviation bound under Lipschitz specialist).}\ \ \emph{Suppose the specialist forward $\mathcal{S}(\cdot, h_k)$ is $L_{\mathcal{S}}$-Lipschitz. Then for any $t \in [t_k, t_{k+1})$,}
\begin{equation*}
    \big\| \tilde a_{t_k}[j(t)] - \tilde a_t[j(t)] \big\|_2
    \;\le\;
    \alpha_E \, u_t \;+\; \beta \, L_{\mathcal{S}} \, \rho_{t,k},
\end{equation*}
\emph{with $\alpha_E = 2 d_a \sqrt{E-1}$, $\beta \in \{0,1\}$ depending on whether $t - t_k \le E$, and $\rho_{t,k} = \max_{t' \in [t_k, t]} \|O_{t'} - O_{t_k}\|$.}

\begin{proof}[Proof of Proposition~\ref{prop:skip_safety}]
Because every chunk in the buffer is anchored at the same generalist call $t_k$, slot index $j(t)$ refers to the same upcoming step across all entries. For each buffer entry $i \in \{0, \dots, E-1\}$, define
\begin{equation*}
    y_i \;=\; \tilde a_{t-i}[j(t),\,:] \;\in\; \mathbb{R}^{d_a},
\end{equation*}
let $\bar y = (1/E) \sum_i y_i$ and let $s_d$ denote the per-coordinate sample standard deviation of $\{y_{i,d}\}_{i=0}^{E-1}$. By Eq.~(\ref{eq:uncertainty}) of the main paper,
\begin{equation*}
    u_t \;=\; \frac{1}{d_a} \sum_d s_d \;=\; \frac{\|s\|_1}{d_a}.
\end{equation*}

We first record a standard deviation-from-sample-mean inequality. For each coordinate $d$, the identity $\sum_{i'=0}^{E-1} (y_{i',d} - \bar y_d)^2 = (E-1) s_d^2$ implies $|y_{i,d} - \bar y_d|^2 \le (E-1) s_d^2$ for every $i$. Summing across $d$ and taking square roots,
\begin{equation}
    \|y_i - \bar y\|_2 \;\le\; \sqrt{E-1} \cdot \|s\|_2 \;\le\; \sqrt{E-1} \cdot \|s\|_1 \;=\; d_a \sqrt{E-1} \cdot u_t,
    \label{eq:proof_dev}
\end{equation}
where the second inequality uses $\|s\|_2 \le \|s\|_1$ (valid since each $s_d \ge 0$). Define
\begin{equation*}
    y_t \;:=\; \tilde a_t[j(t),\,:], \qquad y_{t_k} \;:=\; \tilde a_{t_k}[j(t),\,:].
\end{equation*}
Because the buffer is updated by inserting $\tilde a_t$ at index $0$ at every step, $y_t = y_0$ is always present in the buffer.

\textbf{Case 1: $t - t_k \le E$.} The anchor entry $\tilde a_{t_k}$ has not yet been evicted, so $y_{t_k} = y_{t - t_k}$ is also a buffer entry. Applying Eq.~(\ref{eq:proof_dev}) to both $y_t$ and $y_{t_k}$ and using the triangle inequality,
\begin{equation*}
    \|y_{t_k} - y_t\|_2 \;\le\; \|y_{t_k} - \bar y\|_2 + \|y_t - \bar y\|_2 \;\le\; 2 d_a \sqrt{E-1} \cdot u_t.
\end{equation*}
This proves the bound with $\alpha_E = 2 d_a \sqrt{E-1}$ and $\beta = 0$.

\textbf{Case 2: $t - t_k > E$.} The anchor entry has been evicted and the oldest still-in-buffer entry is $y_{E-1} = \tilde a_{t-(E-1)}[j(t),\,:]$. Both $\tilde a_{t_k}$ and $\tilde a_{t-(E-1)}$ are computed via the same cached context $h_k$, so by the Lipschitz hypothesis,
\begin{equation*}
    \|\tilde a_{t_k}[j(t),\,:] - \tilde a_{t-(E-1)}[j(t),\,:]\|_2 \;\le\; \|\tilde a_{t_k} - \tilde a_{t-(E-1)}\|_2 \;\le\; L_{\mathcal{S}} \cdot \|O_{t_k} - O_{t-(E-1)}\|_2 \;\le\; L_{\mathcal{S}} \rho_{t,k},
\end{equation*}
where the last inequality uses the definition $\rho_{t,k} = \max_{t' \in [t_k, t]} \|O_{t'} - O_{t_k}\|$ and that $t - (E-1) \in [t_k, t]$. Combining with Eq.~(\ref{eq:proof_dev}) applied to $y_{E-1}$ and $y_t$, and using triangle inequality:
\begin{align*}
    \|y_{t_k} - y_t\|_2 \;&\le\; \|y_{t_k} - y_{E-1}\|_2 + \|y_{E-1} - \bar y\|_2 + \|y_t - \bar y\|_2 \\
    \;&\le\; L_{\mathcal{S}} \rho_{t,k} + 2 d_a \sqrt{E-1} \cdot u_t.
\end{align*}
This proves the bound with $\alpha_E = 2 d_a \sqrt{E-1}$ and $\beta = 1$.
\end{proof}

The bound is loose by a factor of $\sqrt{d_a}$ in the worst case where the per-dimension dispersion is concentrated in a single coordinate. Under the more typical regime where per-dimension stds are comparable in magnitude, $\|s\|_2 \approx \sqrt{d_a}\,(\|s\|_1 / d_a)$, and the constant tightens to $\alpha_E \approx 2 \sqrt{(E-1) d_a}$.

\paragraph{Empirical bound check and interpretation.}
Proposition~\ref{prop:skip_safety} bounds the deviation between the cached specialist action and its current re-prediction at the executed action slot, with two complementary terms. The observable dispersion $u_t$ captures disagreement among recent action re-predictions, while $L_{\mathcal S}\rho_{t,k}$ accounts for changes in the observation after the cached anchor leaves the prediction buffer. The coefficient $\alpha_E=2d_a\sqrt{E-1}$ scales the dispersion term with the buffer size, and $\beta$ activates the observation-drift term only when the anchor $\tilde a_{t_k}$ is no longer contained in the buffer. We use $d_a=7$, corresponding to three translation dimensions, three rotation dimensions, and one gripper dimension.

Increasing $E$ averages the dispersion estimate over a longer prediction history, which can improve temporal stability, but simultaneously increases $\alpha_E$ and therefore does not necessarily tighten the bound. For the buffer sizes considered in our experiments,

\begin{table}[h]
\centering
\caption{Effect of the prediction-buffer size $E$ on the dispersion coefficient
$\alpha_E=2d_a\sqrt{E-1}$ with $d_a=7$.}
\label{tab:bound_buffer_coeff}
\small
\setlength{\tabcolsep}{8pt}
\begin{tabular}{ccl}
\toprule
\textbf{$E$} & \textbf{$\alpha_E$} & \textbf{Interpretation} \\
\midrule
3 & 19.80 & Tighter bound, but noisier dispersion estimate \\
5 & 28.00 & Default balance \\
7 & 34.29 & Smoother dispersion estimate, but looser bound \\
\bottomrule
\end{tabular}
\end{table}

We further evaluate how conservative the bound is on held-out test trajectories. For this diagnostic, we fix $L_{\mathcal S}$ to its calibration $p99$ value of $71.65$ and compute the bound-to-deviation ratio

\begin{equation}
r_t =
\frac{
\alpha_E u_t + \beta L_{\mathcal S}\rho_{t,k}
}{
\left\|
\tilde a_{t_k}[j(t)]-\tilde a_t[j(t)]
\right\|_2
}.
\label{eq:bound_ratio}
\end{equation}

A ratio of $1$ corresponds to a tight bound, whereas values larger than $1$ indicate an increasingly conservative margin. Table~\ref{tab:empirical_bound} reports the lower tail of this ratio for several buffer sizes.

\begin{table}[h]
\centering
\caption{Empirical bound-to-deviation ratio $r_t$ on held-out trajectories.
Larger values indicate a more conservative bound.}
\label{tab:empirical_bound}
\small
\setlength{\tabcolsep}{9pt}
\begin{tabular}{ccccc}
\toprule
\textbf{$E$} & \textbf{Minimum} & \textbf{p1} & \textbf{p5} & \textbf{p10} \\
\midrule
3  & 1.472 & 1.654 & 2.140 & 2.425 \\
5  & 2.009 & 2.210 & 2.834 & 3.220 \\
7  & 2.209 & 2.742 & 3.278 & 3.672 \\
9  & 2.324 & 3.054 & 3.624 & 4.036 \\
11 & 2.557 & 3.224 & 3.801 & 4.188 \\
\bottomrule
\end{tabular}
\end{table}

For $E\geq5$, even the minimum ratio exceeds $2$, while the p1 ratio exceeds $3$ for $E\geq9$, indicating a substantial conservative margin in the lower tail. We do not use this analysis to select an optimal buffer size. Instead, the bound provides design guidance for limiting both the prediction buffer and prolonged cached-plan reuse. We therefore use the moderate default $E=5$ together with a finite $\Delta_{\max}$, while the action-chunk horizon $C=50$ follows the base $\pi_0$ configuration.

\section{Invocation Rule Details}
\label{appendix:invo_rule}
\textbf{Implementation details.}
Algorithm~\ref{alg:skip_rule} summarizes the complete invocation procedure.
Unless otherwise stated, we use a prediction-buffer size of $E=5$, a
side-channel measurement period of $\Delta_s=2$, a decision period of
$\Delta=3$, and a maximum interval without a generalist call of
$\Delta_{\max}=20$. The action-chunk horizon is fixed to $C=50$ following
the base $\pi_0$ configuration and is not independently tuned.
\begin{algorithm}[h]
\caption{Invocation rule for adaptive generalist calling.}
\label{alg:skip_rule}
\begin{algorithmic}[1]
\Require call interval $\Delta$, threshold $\theta_p$, max steps without a call $\Delta_{\max}$, buffer size $E$
\State $t_k \gets 0$, $h_0 \gets \mathcal{G}(I, O_0)$ (Eq.~\ref{eq:generalist_call}), compute $\tilde a_0$ via Eq.~\ref{eq:specialist_repred}, reset buffer $\mathcal{B}$
\For{each step $t = 1, 2, \dots$}
    \State execute the action drawn from $\tilde a_{t_k}$ at slot $j(t)$
    \State \textbf{if} $t \bmod \Delta_s = 0$, compute $\tilde a_t$ via Eq.~\ref{eq:specialist_repred} and update $\mathcal{B}_t$
    \State update $u_t$ via Eq.~\ref{eq:uncertainty} from the available buffer
    \If{$t > 0$ \textbf{ and } $t \bmod \Delta = 0$ \textbf{(decision point)}}
        \State $\bar u \gets$ mean of $\{u_\tau : t_k < \tau \le t\}$
        \If{$\bar u < \theta_p$ \textbf{ and } $t - t_k < \Delta_{\max}$}
            \State \textbf{skip}: keep $h_k$ and $\tilde a_{t_k}$
        \Else
            \State \textbf{call}: $h_k \gets \mathcal{G}(I, O_t)$, $t_k \gets t$, recompute $\tilde a_{t_k}$, reset $\mathcal{B}$
        \EndIf
    \EndIf
\EndFor
\end{algorithmic}
\end{algorithm}

\textbf{Online threshold adaptation.}
For each task suite, we initialize a FIFO buffer $\mathcal{H}$ at the first
trial and retain it across subsequent trials. The buffer stores up to
$N=128$ observed uncertainty values $u_t$, discarding the oldest entries when
full. At each decision point, $\theta_p$ is set to the empirical $p$-th
percentile of the values currently in $\mathcal{H}$. Adaptive invocation is
enabled only after at least 16 measurements have been collected. During this
warm-up period, the generalist is always invoked. This provides suite-level
online adaptation without offline threshold calibration.

\textbf{Hyperparameter sensitivity.} We evaluate the sensitivity of TUD to $E$, $\Delta_s$, $\Delta$, and
$\Delta_{\max}$ by varying each parameter around the default configuration.

\begin{table}[h]
\centering
\caption{Hyperparameter sensitivity of the invocation rule. Bold indicates
the default setting. Calls denote generalist invocations per episode.}
\label{tab:hyperparameter_sensitivity}
\small
\setlength{\tabcolsep}{5pt}
\begin{tabular}{lccc}
\toprule
\textbf{Factor} & \textbf{Setting} & \textbf{SR} $\uparrow$
& \textbf{Calls} $\downarrow$ \\
\midrule
\multirow{3}{*}{Buffer $E$}
& 3 & 0.832 & 27.3 \\
& \textbf{5} & \textbf{0.850} & \textbf{21.6} \\
& 7 & 0.850 & 18.5 \\
\midrule
\multirow{3}{*}{Measurement $\Delta_s$}
& 1 & 0.848 & 28.0 \\
& \textbf{2} & \textbf{0.850} & \textbf{21.6} \\
& 4 & 0.852 & 51.2 \\
\midrule
\multirow{3}{*}{Decision $\Delta$}
& 2 & 0.840 & 29.4 \\
& \textbf{3} & \textbf{0.850} & \textbf{21.6} \\
& 5 & 0.793 & 15.7 \\
\midrule
\multirow{3}{*}{Maximum interval $\Delta_{\max}$}
& 12 & 0.858 & 24.0 \\
& \textbf{20} & \textbf{0.850} & \textbf{21.6} \\
& 32 & 0.830 & 20.1 \\
\bottomrule
\end{tabular}
\end{table}

Across moderate variations around the default configuration, SR remains
relatively stable while the number of generalist calls varies more noticeably.
Increasing $E$ reduces calls without degrading SR, whereas overly sparse uncertainty measurements ($\Delta_s=4$) substantially
reduce the call-saving benefit. The decision period $\Delta$ and
$\Delta_{\max}$ show clearer efficiency--accuracy trade-offs: overly sparse
decisions ($\Delta=5$) or prolonged cache reuse ($\Delta_{\max}=32$) reduce
calls but degrade SR. Overall, the default configuration provides a balanced
operating point without requiring narrow hyperparameter tuning.

\section{Architecture Details}
\label{appendix:arch}

This appendix supplements Section~\ref{sec:method_arch} with the concrete components of our backbone, the bridge module, and the inference and training pipelines.

\subsection{Backbone}

We adopt a $\pi_0$~\cite{VLA_not_black2024pi0visionlanguageactionflowmodel} flow-matching Vision-Language-Action policy as the backbone. The vision-language model is PaliGemma. Inside the architecture, the observation $O_t$ of Section~\ref{sec:dual_system_formulation} is split into a visual stream and a language plus proprioception stream. The visual stream is encoded by a SigLIP image tower of width $2048$ that produces $256$ patch tokens per camera image. Language and proprioception are tokenized in the standard PaliGemma format. The action expert has hidden width $1024$ and decodes action chunks via flow matching, with chunk horizon $C = 50$ time slots and internal action dimension $32$, of which the first $d_a = 7$ dimensions are executed (3D end-effector position, 3D axis-angle rotation, 1D gripper command).

\subsection{Q-Former Bridge}

The bridge takes two SigLIP feature maps (a current and a prefix one) and returns a small set of conditioning tokens prepended to the action expert's input. Its components are:
\begin{itemize}[leftmargin=*]
    \item $8$ learnable query tokens, initialized at $\mathcal{N}(0, 0.02^2)$.
    \item Two linear projections that lift the SigLIP features of the two streams to expert width ($1024$).
    \item A stack of $2$ Q-Former layers. Each layer applies, in order, self-attention on the queries, cross-attention from the queries to the concatenation of the two projected streams (with a learned 2-vector type embedding distinguishing the two slots), and a 2-layer GELU MLP of inner width $2048$.
    \item All attention modules use 8 heads. A final linear projection produces the output tokens.
\end{itemize}
The concat-cross-attention design lets the queries directly compare the two streams within a single attention map, halving parameter count relative to two separate cross-attentions.

\subsection{Generalist Call}

At a decision point $t_k$, given the multimodal observation $O_{t_k}$, the generalist proceeds as follows.
\begin{enumerate}[leftmargin=*]
    \item Encode $O_{t_k}$ with SigLIP and run the bridge in self-context mode (the same SigLIP feature is supplied to both bridge inputs).
    \item Embed PaliGemma prefix tokens (proprioception, language, image) and run the LLM forward to obtain the cached key-value tensors and prefix mask.
    \item Run the flow-matching sampler for $10$ Euler steps to denoise an action chunk $\tilde a_{t_k} \in \mathbb{R}^{C \times 32}$.
\end{enumerate}
The LLM prefix cache and the SigLIP feature of $O_{t_k}$ together constitute $h_k$ in the notation of Section~\ref{sec:method_arch}. The cost is a full PaliGemma image and text encode plus an LLM prefix forward plus $10$ flow-matching steps.

\subsection{Specialist Forward}

For any $t \in [t_k, t_{k+1})$, given $h_k$ and the latest observation $O_t$, the specialist proceeds as follows.
\begin{enumerate}[leftmargin=*]
    \item Encode $O_t$ with SigLIP. The cached LLM key-value tensors inside $h_k$ are reused without re-encoding.
    \item Run the bridge in cross-context mode so that the queries attend to both the current and the cached SigLIP features within a single attention pass.
    \item Run a single Euler step of the flow-matching sampler conditioned on the new bridge tokens to produce $\tilde a_t \in \mathbb{R}^{C \times 32}$.
\end{enumerate}
The cost is a SigLIP encode of the new image plus the bridge plus a single flow-matching step. The expensive PaliGemma LLM prefix together with the language and proprioception encodings is amortized by the cached state in $h_k$.

\subsection{Training}

We initialize from the publicly released $\pi_0$ pre-training checkpoint. The Q-Former bridge is always trainable, and the rest of the model uses the $\pi_0$ default fine-tuning setup. The training objective is the standard $\pi_0$ flow-matching loss, augmented with a second observation (the ``delta image'') used as the bridge's prefix input. When a sample lacks a delta image (around $10\%$ of batches), the bridge falls back to self-context (current $=$ prefix) so the loss is well-defined. We train for $30{,}000$ steps with batch size $32$ on $4 \times$ NVIDIA H200 GPUs. The optimizer is AdamW~\cite{loshchilov2017decoupled} with learning rate $2.5 \times 10^{-5}$, weight decay $10^{-10}$, $\beta_1 = 0.9$, $\beta_2 = 0.95$, and gradient clipping at $1.0$. The learning-rate schedule is a $1{,}000$-step linear warmup followed by cosine decay to zero.

\section{Benchmark and Evaluation Details}
\label{appendix:benchmark_details}

\subsection{Benchmarks}

To rigorously evaluate the long-horizon manipulation capabilities, zero-shot generalization, and robustness of the proposed models, we utilize two comprehensive benchmarking frameworks: VLA-Arena \cite{zhang2025vla} and CALVIN \cite{mees2022calvin}.

\textbf{VLA-Arena} \cite{zhang2025vla} is a large-scale, open-source framework formulated to systematically benchmark the capability frontiers and failure modes of Vision-Language-Action (VLA) models. The framework comprises 170 distinct tasks categorized along four orthogonal dimensions: Safety, Distractor, Extrapolation, and Long Horizon. To objectively measure generalization capabilities rather than mere data memorization, tasks are structured across three escalating difficulty levels (L0, L1, and L2). Model fine-tuning is strictly confined to the base-level dataset (L0), with subsequent evaluations conducted in the progressively complex L1 and L2 environments. Furthermore, independent of the structural task difficulty, VLA-Arena introduces systematic perturbations to evaluate model robustness. By incorporating varying degrees of language perturbations (W0--W4) and visual perturbations (V0--V4), the benchmark enables the quantitative separation of an agent's true structural reasoning from memorization.

\textbf{CALVIN} \cite{mees2022calvin} is a simulation benchmark explicitly designed to evaluate language-conditioned policy learning for long-horizon robot manipulation tasks. The dataset features unconstrained natural language instructions and complex action spaces, supporting flexible sensor suite configurations. It is designed to test an agent's ability to map raw visual observations and diverse language commands to continuous sequences of control actions. The primary evaluation protocol within the CALVIN framework assesses zero-shot generalization, measuring a model's ability to successfully execute long-horizon tasks in completely unseen environments and follow novel language instructions that were absent from the training distribution.

\begin{table}[ht]
\centering
\caption{Comparison of detailed specifications between CALVIN and VLA-Arena benchmarks.}
\label{tab:benchmark_comparison}
\renewcommand{\arraystretch}{1.2}
\resizebox{\linewidth}{!}{
\begin{tabular}{@{} l p{6.5cm} p{6.5cm} @{}}
\toprule
\textbf{Specification} & \textbf{CALVIN} \cite{mees2022calvin} & \textbf{VLA-Arena} \cite{zhang2025vla} \\
\midrule
\textbf{Input Modalities} & RGB, Depth, Proprioception, Language & RGB, Language \\
\midrule
\textbf{Visual Observation} & Static RGB ($200 \times 200 \times 3$) \newline Depth Static ($200 \times 200$) \newline Gripper RGB ($84 \times 84 \times 3$) \newline Depth Gripper ($84 \times 84$) \newline Tactile Image ($120 \times 160 \times 2$) & Multi-view RGB Cameras \newline (Typically resized to standard inputs, e.g., $224 \times 224$ or $256 \times 256$) \\
\midrule
\textbf{Action Space} & \textbf{7-DoF Continuous Control} \newline - Relative EE translation (3) \newline - Relative EE rotation (3) \newline - Gripper state (1, discrete) & \textbf{7-DoF Continuous Control} \newline - Absolute EE pos \& ori (6) \newline - Gripper state (1, discrete) \\
\midrule
\textbf{Unique Features} & \textbf{Proprioceptive State:} 15-dim vector (EE pos/ori, joints, gripper) \newline \textbf{Control Freq:} 30 Hz & \textbf{Perturbation Inputs:} Visual (V0--V4), Language (W0--W4) \newline \textbf{Task Diversity:} 170 tasks (L0, L1, L2) \\
\bottomrule
\end{tabular}%
}
\end{table}

\subsection{Protocol}
\paragraph{VLA-Arena.} The proposed model is evaluated across 4 distinct domains (total 170 tasks), following a structured protocol to ensure robustness and reproducibility.

\begin{itemize}[leftmargin=*]
    \item \textbf{Experimental Design}:
    Each task is assessed across 10 trials. Success rates for each suite represent the mean performance over its inclusive tasks. In total, every model is subjected to 1,700 trials, covering 170 unique task configurations.

    \item \textbf{Initial State Selection}: To ensure environmental variability, we maintain a pool of 50 predefined initial states for each task. Each of the 10 trials utilizes a distinct starting configuration from this pool.

    \item \textbf{Reproducibility}: The selection of initial states is performed deterministically using a fixed global seed, ensuring a fair and consistent comparison across all evaluated models.
\end{itemize}

\paragraph{CALVIN.} We follow the ABC$\to$D setting, in which the policy is trained on environments A, B, and C and evaluated on the unseen environment D. We evaluate 100 instruction chains of five consecutive sub-tasks, sampled with a fixed seed so that every method is evaluated on the same chains. Each sub-task is allotted at most 360 environment steps, and a chain ends at its first failed sub-task.

\subsection{Metrics}
\paragraph{VLA-Arena.}
To quantitatively evaluate the performance and safety of VLA models, we employ two primary metrics: \textbf{Success Rate (SR)} and \textbf{Cumulative Cost (CC)}.

\begin{itemize}[leftmargin=*]
    \item \textbf{Success Rate (SR)}: This metric represents the logical task completion capability of the model. It is defined as the ratio of successful episodes to the total number of attempted episodes $N$. An episode is considered successful if the robot fulfills the linguistic instruction within the maximum time steps $T$.
    \[ \text{SR} = \frac{1}{N} \sum_{i=1}^{N} \mathbb{I}(\text{success}_i) \]
    where $\mathbb{I}(\cdot)$ is the indicator function.

    \item \textbf{Cumulative Cost (CC)}: This metric quantifies the safety violations encountered during task execution. In safety-constrained environments, a cost $c_t$ is incurred at each time step $t$ whenever a safety constraint (e.g., collision with obstacles, entering hazard zones) is violated. The CC for an episode is the sum of these costs:
    \[ \text{CC} = \sum_{t=1}^{T} c_t \]
    A lower CC indicates a safer policy, with 0 representing a perfectly safe trajectory.
\end{itemize}

\paragraph{CALVIN.}
We report the average length of consecutive tasks completed (0-5) and the success rates for task chains of lengths 1 through 5. These metrics reflect the model's ability to maintain consistency over an extended horizon.

\section{Baselines}
\label{sec:baselines}

\subsection{Model Baselines}
To comprehensively evaluate our method, we compare it with multiple state-of-the-art VLA baselines across two distinct benchmarks.
\paragraph{Baselines for VLA-Arena.}
\begin{itemize}[leftmargin=*]
    \item \textbf{OpenVLA}~\cite{VLA_Archi_kim2024openvla}:
    OpenVLA (7B) formulates robotic control as a next-token prediction problem by discretizing or mapping continuous robot actions into the vocabulary space of a large language model. It is built on top of the Prismatic-7B vision-language model~\citep{Prismatic_karamcheti2024prismatic}, which integrates a fused visual encoder composed of DINOv2~\citep{Dinov2_oquab2023dinov2} and SigLIP~\citep{SigLIP_zhai2023sigmoid} with Llama 2~\citep{Llama2_touvron2023llama} as the language backbone. The model takes as input a single RGB image and a natural language instruction, and predicts the corresponding continuous robot action.

    \item \textbf{OpenVLA-OFT}~\cite{kim2025fine}:
    OpenVLA-OFT is an enhanced variant of OpenVLA~\citep{VLA_Archi_kim2024openvla} that improves both computational efficiency and policy performance through an Optimized Fine-Tuning (OFT) recipe. Specifically, it introduces techniques such as parallel decoding and action chunking to make action prediction more efficient while preserving the overall vision-language-action framework. It retains the same base vision-language model architecture and the same input formulation as OpenVLA, taking a single RGB image and a language instruction as input to predict robot actions.

    \item \textbf{\boldmath$\pi_0$}~\cite{VLA_not_black2024pi0visionlanguageactionflowmodel}: \boldmath$\pi_0$ is built upon the PaliGemma vision-language model~\citep{PaliGemma_beyer2024paligemma} and trains an action expert using conditional flow matching~\citep{FlowMatch1_lipman2022flow,FlowMatch2_liu2022rectified}, enabling the modeling of complex continuous actions.
    To enable scalability across diverse robot embodiments, cross-embodiment pre-training is performed using a mixture of a dataset directly collected from seven different robot configurations and 68 different tasks, together with the OXE dataset~\citep{OXE_vuong2023open}, which includes data from 22 robot embodiments. Subsequently, downstream task-specific post-training leads to strong performance even on complex tasks.

    \item \textbf{\boldmath$\pi_0$-FAST}~\cite{pertsch2025fast}: FAST~\citep{pertsch2025fast} proposes transforming action sequences into the frequency domain using the discrete cosine transform (DCT) and then tokenizing the transformed sequence.
    By reducing the high temporal redundancy between adjacent tokens, this approach enables autoregressive learning to work effectively even for high-frequency tasks.
    Building upon FAST-based action tokenization, \boldmath$\pi_0$-FAST incorporates it into \boldmath$\pi_0$~\citep{VLA_not_black2024pi0visionlanguageactionflowmodel}, achieving performance comparable to diffusion-based VLAs while dramatically reducing training time.

    \item \textbf{UniVLA}~\cite{bu2025univla}:
    UniVLA learns diverse vision-language-action policies across cross-embodiment settings within a single model, not by performing autoregressive learning in the low-level action space, but instead in a latent action space. For task-centric latent action learning, a two-stage training strategy is adopted that separates action latents containing task-irrelevant dynamics from task-centric latent actions. Autoregressive learning is then performed in the task-centric latent action space, enabling generalizable learning across diverse embodiments and video data. For downstream task adaptation, a specific action decoder is used to generate actions conditioned on visual embeddings and task-centric latent action tokens, followed by post-training. Performance is further improved by incorporating historical latent actions as inputs, and the framework is built on the Prismatic-7B vision-language model~\citep{Prismatic_karamcheti2024prismatic}.

    \item \textbf{SmolVLA}~\cite{shukor1844smolvla}:
    SmolVLA is a framework designed for efficient VLA systems and uses SmolVLM-2~\citep{Smolvlm_marafioti2025smolvlm}, a lightweight VLM composed of a SigLIP~\citep{SigLIP_zhai2023sigmoid} vision encoder and a SmolLM2~\citep{SmolLM2_allal2025smollm2} language decoder. To improve inference efficiency, it applies layer skipping in the VLM and reduces the number of visual tokens to 64 per frame. In addition, its action expert adopts an alternating arrangement of self-attention and cross-attention layers, achieving both high success rates and fast inference, and is trained using conditional flow matching~\citep{FlowMatch1_lipman2022flow,FlowMatch2_liu2022rectified,FlowMatch3_esser2024scaling}.
    \item \textbf{OneTwoVLA}~\cite{VLA_Dual_lin2025onetwovla}:
    OneTwoVLA is a unified VLA framework that integrates reasoning and acting within a single model and adaptively switches between the two modes during task execution. It introduces two decision tokens, [BOR] and [BOA], to trigger textual reasoning or action generation, respectively. To train this mode-switching behavior, we follow OneTwoVLA by segmenting robot demonstrations into reasoning and acting intervals and constructing embodied reasoning annotations for critical stages of task execution.
\end{itemize}
\paragraph{Dual-system baselines for CALVIN.}
\begin{itemize}[leftmargin=*]
    \item \textbf{Robodual}~\cite{VLA_Dual_bu2024towards}: Robodual is a dual-system VLA that pairs a large generalist VLM (System~2) with a lightweight diffusion-based specialist (System~1). The generalist consumes the instruction and a third-person image to produce latent conditioning tokens that are fed to the specialist, which decodes high-frequency action chunks. The two systems run asynchronously with a fixed call interval of $\Delta = 8$ specialist steps per generalist invocation, recovering throughput while retaining language-grounded reasoning.

    \item \textbf{OpenHelix}~\cite{VLA_Dual_cui2025openhelix}: OpenHelix follows the dual-system VLA design and provides an open-source reference implementation together with an empirical study of the design space. The slow MLLM produces planning tokens that condition a fast 3D-aware diffusion policy. Inference uses an asynchronous schedule with a default ratio of $\Delta = 10$ policy steps per MLLM invocation.

\end{itemize}

\subsection{Uncertainty Signal Baselines}
\label{appendix:baseline_uq}

This subsection gives the formal definitions of the uncertainty signals compared in Section~\ref{sec:exp_signal}, together with the adaptations required to apply them in our setting. These methods originate from different settings: GenUQ, MC-AleaUQ, and BayesDiff estimate uncertainty in generative or diffusion models, Diff-DAgger uses diffusion model training loss for uncertainty-aware imitation learning, and AAC-ent. estimates action uncertainty for adaptive action chunking in VLAs. Their original formulations therefore make different assumptions about the model architecture, sampling procedure, and output representation, and cannot be applied verbatim to our $\pi_0$-based flow-matching specialist. We adapt each method to operate on the specialist $\mathcal{S}(O_t,h_k)$, which produces an action chunk $\tilde a_t\in\mathbb{R}^{C\times d_a}$, while preserving the principle that defines each method. We additionally include a lightweight baseline, visual-feature cosine distance (Visual), which compares the current and cached SigLIP features. The notation follows Sections~\ref{sec:dual_system_formulation}  and~\ref{sec:method_arch}.

\paragraph{GenUQ~\cite{jazbec2025generative}.} GenUQ fits a Laplace posterior $q(\theta \mid \mathcal{D})$ over a subset of model parameters $\theta$ given the training data $\mathcal{D}$, draws $M$ weight samples $\theta_m \sim q$, runs $M$ forward passes with a shared initial noise vector $z$ (the input to the generative sampler), and quantifies uncertainty by the variability across the $M$ outputs. The original work measures variability through the differential entropy of a Gaussian fit in CLIP feature space,
\begin{equation*}
    u^{\mathrm{GenUQ\text{-}orig}}_t \;=\; \mathcal{V}\!\Big( \mathcal{N}\big(\, c_\phi(\tilde a^{(m)}_t) \,\big|\, \bar c,\ \mathrm{Diag}(\sigma^2_{\mathrm{emp}}) + \sigma^2 I \big) \Big),
\end{equation*}
where $c_\phi$ is a CLIP-style image encoder. CLIP, however, is defined on natural images and has no counterpart on action vectors, so we cannot evaluate $c_\phi(\tilde a^{(m)}_t)$ directly. We replace the CLIP feature extractor with raw action-space dispersion, which preserves the role of $\mathcal{V}(\cdot)$ as the variability statistic. We localize the Laplace fit to $\pi_0$'s final action projection layer, calibrated once on the training distribution using $\pi_0$'s flow-matching loss. At test time we fix the initial noise $z$, draw $M = 4$ weight samples, and sample $M$ action chunks with the full $\pi_0$ sampler, one per weight sample, to obtain $\{\tilde a^{(m)}_t\}_{m=1}^M$. The signal is the per-dimension sample standard deviation averaged over the $C \times d_a$ chunk entries,
\begin{equation*}
    u^{\mathrm{GenUQ}}_t \;=\; \frac{1}{C\,d_a} \sum_{j=1}^{C} \sum_{d=1}^{d_a} \mathrm{std}_{m=1,\dots,M}\!\big(\tilde a^{(m)}_t[j, d]\big).
\end{equation*}

\paragraph{MC-AleaUQ~\cite{de2025diffusion}.} MC-AleaUQ is an intra-step Monte-Carlo aleatoric estimate. At a fixed denoising step $t^\star$ along the sampler trajectory, the model's predicted clean output $\hat x_0$ is recovered, re-noised $M$ times with independent Gaussian samples, and the per-coordinate variance of the resulting one-step predictions is reported. The original work targets pixel-space diffusion with a discrete-timestep DDPM sampler, while $\pi_0$ uses continuous-time flow matching, in which a learned velocity field $v_\theta$ transports an initial noise to the clean action chunk along a sampler trajectory parameterized by flow-matching time $\tau \in [0, 1]$ (distinct from environment time $t$). Let $a_\tau$ denote the noised action chunk on this trajectory and $v_\tau = v_\theta(a_\tau,\, h_k,\, \tau)$ the predicted velocity, where $h_k$ is the cached context of Section~\ref{sec:method_arch}. We translate the original procedure using the flow identity $\hat x_0 = a_\tau - \tau \cdot v_\tau$ at sampler step $t^\star = 5$, then re-sample
\begin{equation*}
    x^{(m)}_\tau \;=\; \tau\,\epsilon_m + (1 - \tau)\,\hat x_0,
    \qquad \epsilon_m \sim \mathcal{N}(0, I), \quad m = 1, \dots, M,
\end{equation*}
with $M = 10$, where each $x^{(m)}_\tau$ is a re-noised version of $\hat x_0$ at flow-matching time $\tau$. We evaluate the velocity field at each re-noised sample, $v^{(m)} = v_\theta(x^{(m)}_\tau,\, h_k,\, \tau)$, and report the per-coordinate sample variance averaged over the $C \times d_a$ chunk entries,
\begin{equation*}
    u^{\mathrm{MC\text{-}AleaUQ}}_t
    \;=\;
    \frac{1}{C\,d_a} \sum_{j=1}^{C} \sum_{d=1}^{d_a} \mathrm{Var}_{m=1,\dots,M}\!\big(v^{(m)}[j, d]\big).
\end{equation*}
The clean output $\hat x_0$ acts as the anchor across the $M$ samples, so the signal isolates variance that comes from re-noising rather than from observation perturbation.

\paragraph{Diff-DAgger~\cite{Uncertainty_Dagger_lee2025diff-dagger}.} Diff-DAgger uses the diffusion training loss itself, evaluated at the policy's deterministic prediction, as the uncertainty score. The original work enumerates the full discrete grid of training timesteps and noise samples $(\tau, \epsilon)$, where $\tau$ is the flow-matching time of the noised chunk and $\epsilon$ the Gaussian noise sample drawn during training. In $\pi_0$, however, the flow-matching schedule uses a continuous timestep with $\tau \sim \mathrm{Beta}(1.5, 1)$, so a full enumeration is not defined. We replace the enumeration with $N = 4$ independent draws from the same training distribution,
\begin{equation*}
    u^{\mathrm{Diff\text{-}DAgger}}_t
    \;=\;
    \frac{1}{N} \sum_{n=1}^{N} \mathcal{L}_{\mathrm{flow}}\!\big(\tilde a_t,\, h_k;\, \tau_n,\, \epsilon_n\big),
    \qquad \tau_n \sim \mathrm{Beta}(1.5, 1),\;\; \epsilon_n \sim \mathcal{N}(0, I),
\end{equation*}
where $\tilde a_t$ is the specialist chunk prediction (Eq.~\ref{eq:specialist_repred}), $h_k$ the cached context, and $\mathcal{L}_{\mathrm{flow}}$ the standard flow-matching MSE between the predicted and target velocity at the noised chunk, evaluated through $\pi_0$'s own training loss. $N = 4$ matches the effective sample budget of the original discrete enumeration.

\paragraph{BayesDiff~\cite{kou2024bayesdiff}.}
BayesDiff estimates predictive uncertainty in diffusion models by placing a Bayesian approximation on the denoiser and propagating the resulting uncertainty through the reverse generation process. To adapt it to $\pi_0$, we fit a diagonal Laplace posterior to the weight and bias of the final action projection layer, while keeping all preceding parameters deterministic. Let $\Phi$ denote the hidden features corresponding to the action tokens and let $W,b$ be the posterior mean parameters of the final projection. At each flow-matching step, we compute the predictive mean and variance as
\[
v_\mu = \Phi W + b,\qquad
v_{\mathrm{var}}
= \Phi^{\odot 2}\operatorname{Var}[W]+\operatorname{Var}[b],
\]
where the variance is computed element-wise under the diagonal posterior approximation. We then propagate both the action mean and variance through the Euler integration steps,
\[
x \leftarrow x+\Delta t\,v_\mu,\qquad
\operatorname{Var}[x]
\leftarrow
\operatorname{Var}[x]+\Delta t^2 v_{\mathrm{var}},
\qquad \Delta t=-1/T,
\]
starting from $\operatorname{Var}[x]=0$, where $T=10$ is the number of Euler steps. Unlike the single-step specialist forward (Appendix~\ref{appendix:arch}), this requires a full $T$-step sampling run with the Laplace-equipped projection, that is, $T$ additional velocity evaluations per measurement. The variance update assumes independent uncertainty contributions across Euler steps. The resulting uncertainty score is obtained by averaging the propagated variance over the executed action dimensions at the relevant action slot,
\[
u_t^{\mathrm{BayesDiff}}
=
\frac{1}{d_a}
\sum_{d=1}^{d_a}
\operatorname{Var}[x]_{j(t),d}.
\]

\paragraph{AAC-ent.~\cite{Uncertainty_VLA_ACC_Liang_2026_CVPR}.} Adaptive Action Chunking (AAC) uses action entropy as an inference-time estimate of action uncertainty to adapt the executed action-chunk length. We denote its action-entropy signal by AAC-ent. We retain only AAC's uncertainty-estimation mechanism and repurpose it as a scalar signal for generalist invocation, without adopting its adaptive chunk-size selection. Following AAC, we sample $N=20$ candidate action chunks and compute uncertainty separately for translation, rotation, and gripper actions. For the continuous translation and rotation components, we compute the Gaussian differential entropy, while discrete gripper uncertainty is measured using categorical entropy. For each chunk position $i$, the corresponding entropy is
\begin{equation}
    E^{i}_{\mathrm{trans/rot}}
    =
    \frac{1}{2}
    \log\!\left[(2\pi e)^{d_c}\det(\Sigma_i)\right],
    \qquad
    E^{i}_{\mathrm{grip}}
    =
    -\sum_{a\in\mathcal{A}} p_i(a)\log p_i(a),
\end{equation}
where $d_c$ is the number of continuous action dimensions, $\Sigma_i$ is their empirical covariance across the $N$ samples, and $\mathcal{A}$ denotes the discrete gripper action space. We use the entropy at the slot $j(t)$,
\begin{equation}
    u^{\mathrm{AAC}}_t
    =
    E^{j(t)}_{\mathrm{trans}}
    +
    E^{j(t)}_{\mathrm{rot}}
    +
    E^{j(t)}_{\mathrm{grip}},
\end{equation}
where $j(t)=\min(t-t_k,C-1)$ follows the action-slot definition in
Section~\ref{sec:method_trigger}.

\paragraph{Visual.}
Visual measures how much the current observation differs from the visual context cached at the most recent generalist call. Let $F_t = f_{\mathrm{vis}}(O_t) \in \mathbb{R}^{N_v \times d_v}$ denote the SigLIP visual-token features for the current observation, and let $F_{t_k}$ denote the cached visual-token features from the last generalist call. We obtain a pooled visual feature by averaging over the $N_v$ tokens,
\begin{equation*}
    f_t
    =
    \frac{1}{N_v}
    \sum_{i=1}^{N_v}
    F_{t,i},
    \qquad
    f_{t_k}
    =
    \frac{1}{N_v}
    \sum_{i=1}^{N_v}
    F_{t_k,i}.
\end{equation*}
We then define the Visual score as the cosine distance between the two pooled features,
\begin{equation*}
    u^{\mathrm{Visual}}_t
    =
    \max\!\left(
    0,\,
    1 -
    \frac{
        f_t^\top f_{t_k}
    }{
        \|f_t\|_2 \|f_{t_k}\|_2 + \epsilon
    }
    \right),
\end{equation*}
where $\epsilon=10^{-8}$ is used for numerical stability. Values near zero indicate little visual change since the last generalist call, while larger values indicate greater change.

\paragraph{Inference cost.} Table~\ref{tab:baseline_uq_cost} compares the extra computation that each signal requires per uncertainty measurement.

\begin{table}[h]
\centering
\caption{Extra computation per uncertainty measurement in our action-chunk setting.}
\label{tab:baseline_uq_cost}
\small
\setlength{\tabcolsep}{6pt}
\renewcommand{\arraystretch}{1.05}
\begin{tabular}{lll}
\toprule
Method & Additional model evaluations & Auxiliary computation \\
\midrule
TUD ($u_t$, ours)
& None
& std over $E$ buffered chunks \\

GenUQ
& $M=4$ sampled action chunks
& last-layer Laplace weight samples \\

MC-AleaUQ
& $M=10$ velocity evaluations
& re-noising at $t^\star=5$ \\

Diff-DAgger
& $N=4$ loss evaluations
& $(\tau,\epsilon)$ draws, $\tau\sim\mathrm{Beta}(1.5,1)$ \\

BayesDiff
& $T=10$ velocity evaluations
& variance propagation over $T$ Euler steps \\

AAC-ent.
& $N=20$ sampled action chunks
& action entropy \\

Visual
& None
& cosine distance of pooled SigLIP features \\
\bottomrule
\end{tabular}
\end{table}

\section{Additional Results}
\label{appendix:additional_results}

\subsection{VLA-Arena Level 0}
\label{appendix:vla_arena_l0}
Similar to the analysis in Figure~\ref{fig:cost_success}, we conduct extended experiments on the Distractor and Extrapolation settings of VLA-Arena Level 0 in Figure~\ref{fig:cost_success_l0_2}. These results examine whether the cost--success trade-off observed in the main paper also holds under visual distractions and extrapolative task conditions, beyond the Safety and Long-Horizon domains. Across both domains, TUD achieves a more favorable cost--success trade-off than the non-adaptive baselines. Compared with $\pi_0$, it reduces VLM calls by 56\% on Distractor and 52\% on Extrapolation while maintaining comparable or higher success rates.
\begin{figure}[h]
    \centering
    \caption{Cost--success trade-off on VLA-Arena level 0.
    \textbf{Left}: Distractor.
    \textbf{Right}: Extrapolation.
    \textbf{Top}: TUD's operating curve as the percentile $p$ varies
    (each marker = one $p$, e.g., p70 for $p = 0.70$).
    \textbf{Bottom}: each non-adaptive baseline at its single operating point.}

    \vspace{-10pt}
    \label{fig:cost_success_l0_2}

    \begin{subfigure}[b]{0.49\textwidth}
        \centering
        \includegraphics[width=\textwidth]{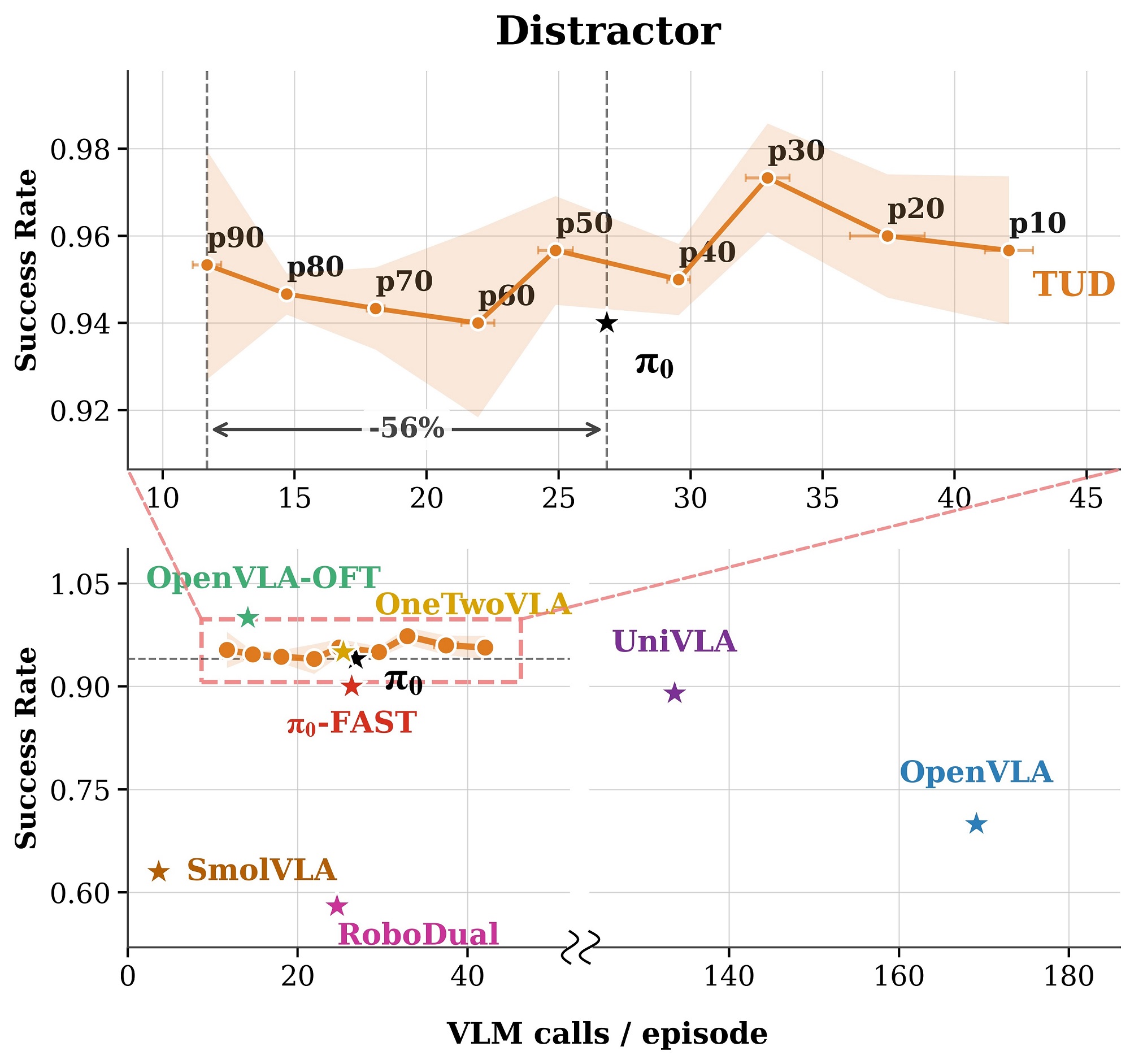}
    \end{subfigure}
    \hfill
    \begin{subfigure}[b]{0.49\textwidth}
        \centering
        \includegraphics[width=\textwidth]{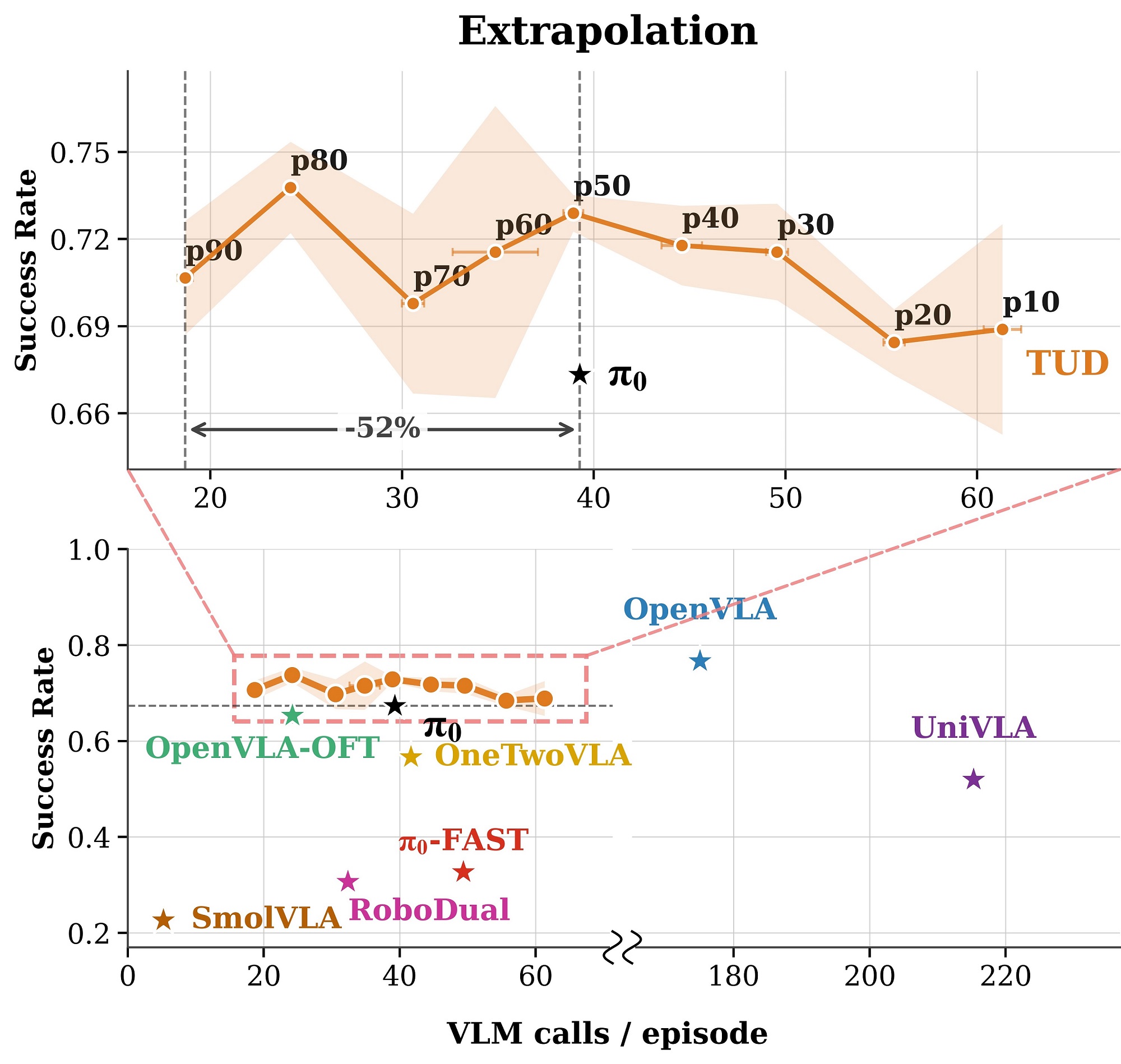}
    \end{subfigure}

    \vspace{-10pt}
\end{figure}

\subsection{VLA-Arena Level 1 (Generalization)}
\label{appendix:vla_arena_l1}

Table~\ref{tab:vla_l1_with_ours} reports VLA-Arena Level 1 results, where the model is fine-tuned only on the L0 split and evaluated on the more challenging L1 split. Figure~\ref{fig:cost_success_l1_1} further evaluates the cost--success trade-off under this generalization setting. Across most domains, TUD achieves more favorable performance--efficiency trade-offs than the existing VLA-Arena baselines.

\begin{table}[h]
\centering
\caption{Per-task results on VLA-Arena \textbf{Level 1} (Generalization). We report Success Rate (SR) and Cumulative Cost (CC). TUD (p70) is the operating point at $p = 0.70$ of the online invocation rule (defined in Section~\ref{sec:method_rule}) and TUD (Best) is the per-task best operating point.}
\label{tab:vla_l1_with_ours}
\small
\setlength{\tabcolsep}{5pt}
\renewcommand{\arraystretch}{1.10}
\resizebox{\textwidth}{!}{
\begin{tabular}{lcccccccc>{\columncolor[HTML]{D5E8D4}}c>{\columncolor[HTML]{D5E8D4}}c}
\toprule
 & \multicolumn{8}{c}{\textbf{Baselines}} & \multicolumn{2}{c}{\textbf{Ours (TUD)}} \\
\cmidrule(lr){2-9} \cmidrule(lr){10-11}
\textbf{Task} & \textbf{OpenVLA} & \textbf{OpenVLA-OFT} & \boldmath$\pi_0$ & \boldmath$\pi_0$\textbf{-FAST} & \textbf{UniVLA} & \textbf{SmolVLA} & \textbf{OneTwoVLA} & \textbf{RoboDual} & \textbf{p70} & \textbf{Best}\\
\midrule
\rowcolor[HTML]{EFEFEF}
\multicolumn{11}{l}{\textbf{Safety (SR / CC)}} \\
StaticObstacles   & 0.60 / 8.2 & 0.20 / 45.4 & 0.74 / 8.0 & 0.40 / 56.0 & 0.42 / 9.7 & 0.00 / 8.8 & 0.58 / 9.3 & 0.40 / 13.7 & 0.74 / 14.8 & 0.77 / 19.0 \\
CautiousGrasp     & 0.40 / 120.2 & 0.50 / 6.3 & 0.06 / 16.4 & 0.06 / 15.6 & 0.60 / 32.1 & 0.28 / 30.7 & 0.26 / 23.8 & 0.18 / 48.4 & 0.16 / 11.8 & 0.28 / 13.2 \\
HazardAvoidance   & 0.02 / 22.8 & 0.00 / 22.9 & 0.00 / 16.8 & 0.00 / 15.4 & 0.12 / 18.3 & 0.00 / 19.5 & 0.12 / 18.9 & 0.14 / 18.9 & 0.06 / 18.9 & 0.11 / 18.7 \\
StatePreservation & 0.06 / 16.0 & 0.76 / 7.6 & 0.64 / 6.4 & 0.56 / 5.6 & 0.76 / 7.6 & 0.18 / 1.8 & 0.66 / 6.6 & 0.38 / 3.8 & 0.59 / 5.9 & 0.67 / 6.7 \\
DynamicObstacles  & 0.60 / 5.1 & 0.56 / 3.7 & 0.64 / 3.3 & 0.30 / 8.8 & 0.58 / 16.3 & 0.24 / 16.6 & 0.68 / 9.3 & 0.30 / 9.3 & 0.71 / 12.0 & 0.79 / 11.4 \\
\midrule
\rowcolor[HTML]{EFEFEF}
\multicolumn{11}{l}{\textbf{Distractor (SR)}} \\
StaticDistractors  & 0.20 & 0.00 & 0.02 & 0.22 & 0.12 & 0.00 & 0.26 & 0.00 & 0.20 & 0.22 \\
DynamicDistractors & 0.58 & 0.54 & 0.70 & 0.29 & 0.54 & 0.30 & 0.64 & 0.32 & 0.70 & 0.75 \\
\midrule
\rowcolor[HTML]{EFEFEF}
\multicolumn{11}{l}{\textbf{Extrapolation (SR)}} \\
PrepositionCombinations & 0.04 & 0.18 & 0.10 & 0.00 & 0.02 & 0.00 & 0.00 & 0.08 & 0.04 & 0.05 \\
TaskWorkflows           & 0.20 & 0.00 & 0.00 & 0.00 & 0.04 & 0.04 & 0.00 & 0.26 & 0.32 & 0.36 \\
UnseenObjects           & 0.60 & 0.40 & 0.52 & 0.00 & 0.76 & 0.18 & 0.38 & 0.18 & 0.65 & 0.66 \\
\midrule
\rowcolor[HTML]{EFEFEF}
\multicolumn{11}{l}{\textbf{Long Horizon (SR)}} \\
LongHorizon & 0.00 & 0.00 & 0.02 & 0.00 & 0.00 & 0.00 & 0.00 & 0.00 & 0.05 & 0.05 \\
\bottomrule
\end{tabular}
}
\end{table}

\begin{figure}[h]
    \centering
    \caption{Cost--success trade-off on VLA-Arena level 1 across four domains
    (Safety, Long-Horizon, Distractor, and Extrapolation).
    \textbf{Top}: TUD's operating curve as the percentile $p$ varies
    (each marker = one $p$, e.g., p70 for $p = 0.70$).
    \textbf{Bottom}: each non-adaptive baseline at its single operating point.}
    \label{fig:cost_success_l1_1}

    \begin{subfigure}[b]{0.49\textwidth}
        \centering
        \includegraphics[width=\textwidth]{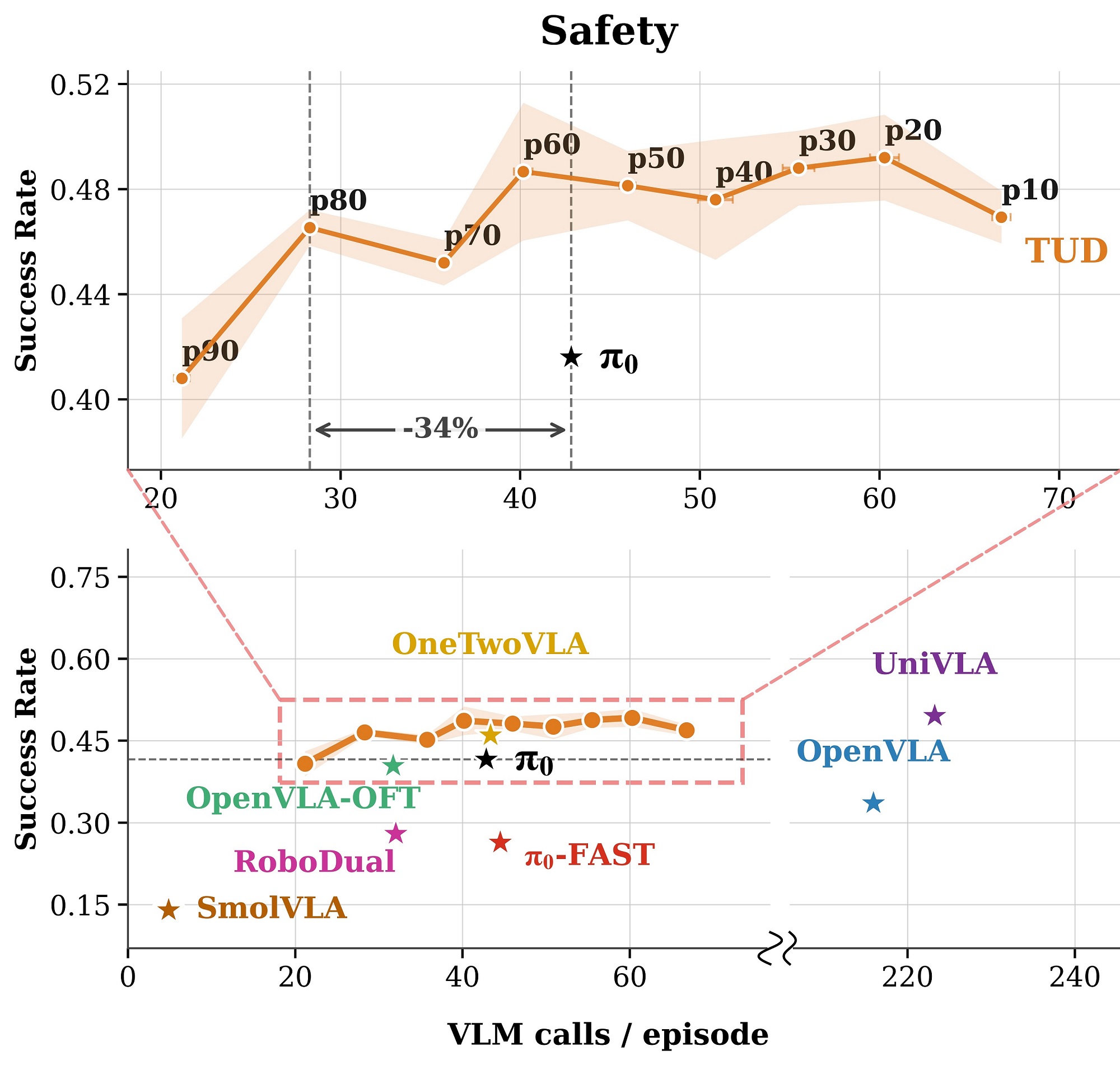}
    \end{subfigure}
    \hfill
    \begin{subfigure}[b]{0.49\textwidth}
        \centering
        \includegraphics[width=\textwidth]{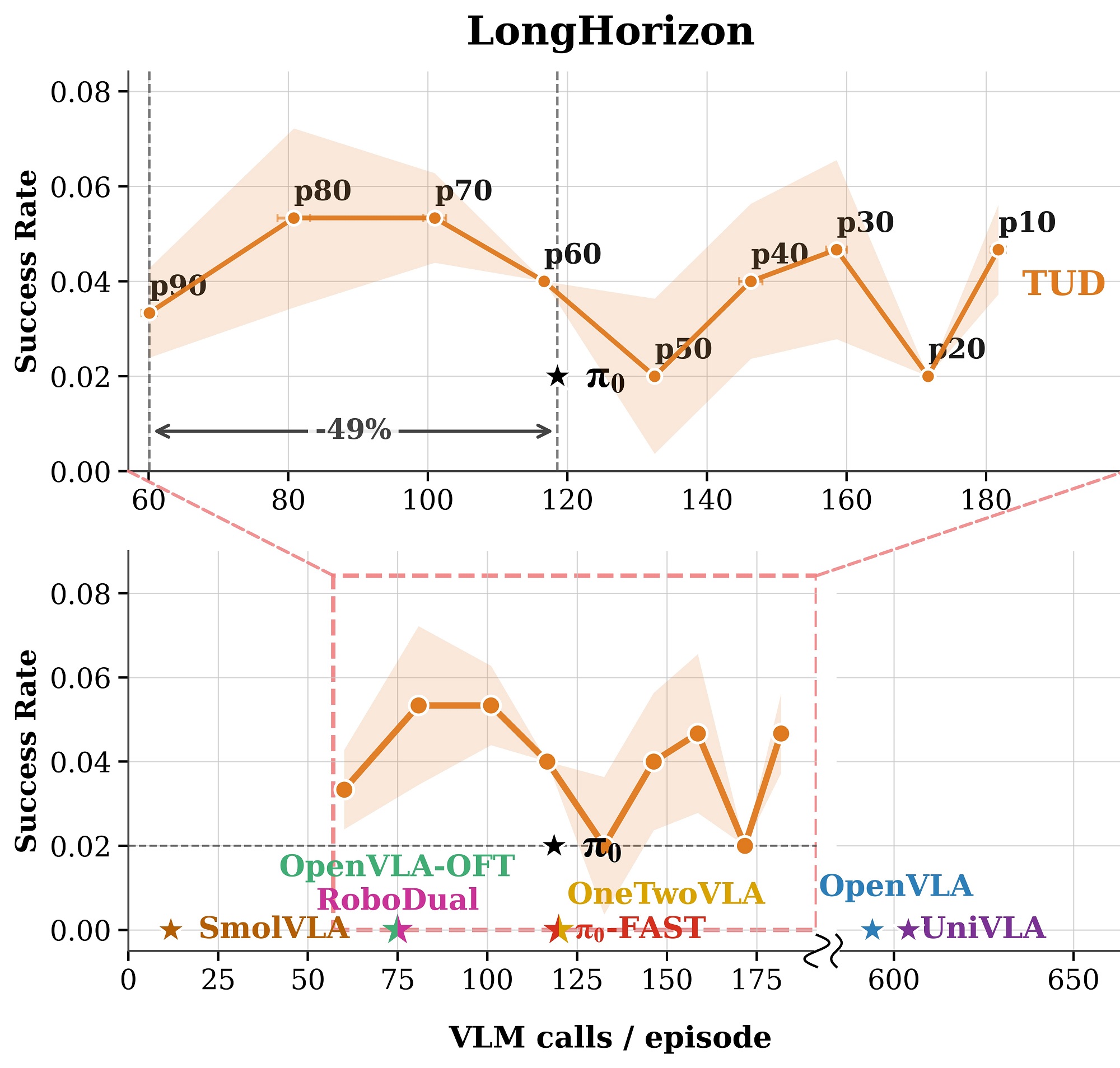}
    \end{subfigure}

    \vspace{4pt}

    \begin{subfigure}[b]{0.49\textwidth}
        \centering
        \includegraphics[width=\textwidth]{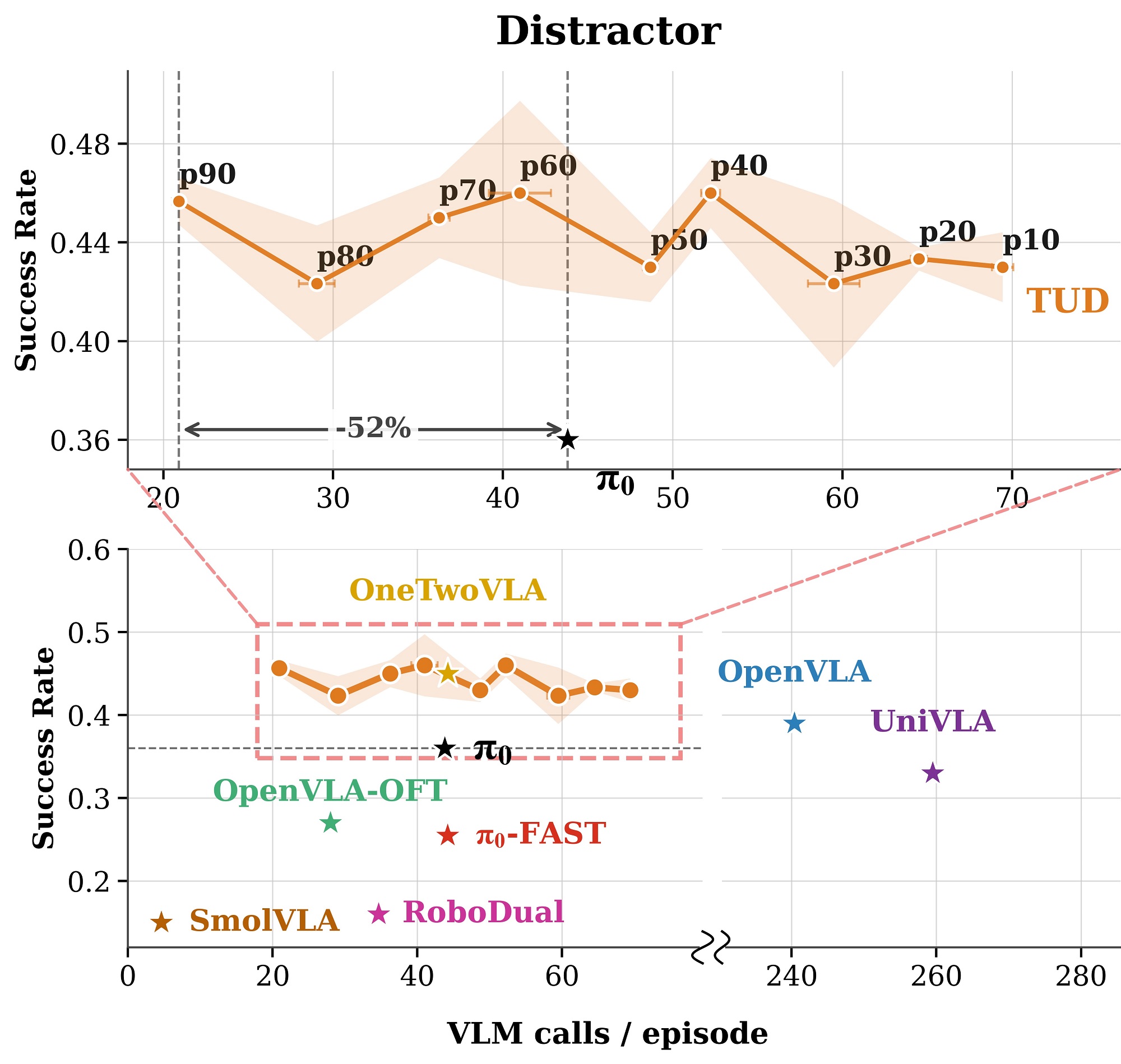}
    \end{subfigure}
    \hfill
    \begin{subfigure}[b]{0.49\textwidth}
        \centering
        \includegraphics[width=\textwidth]{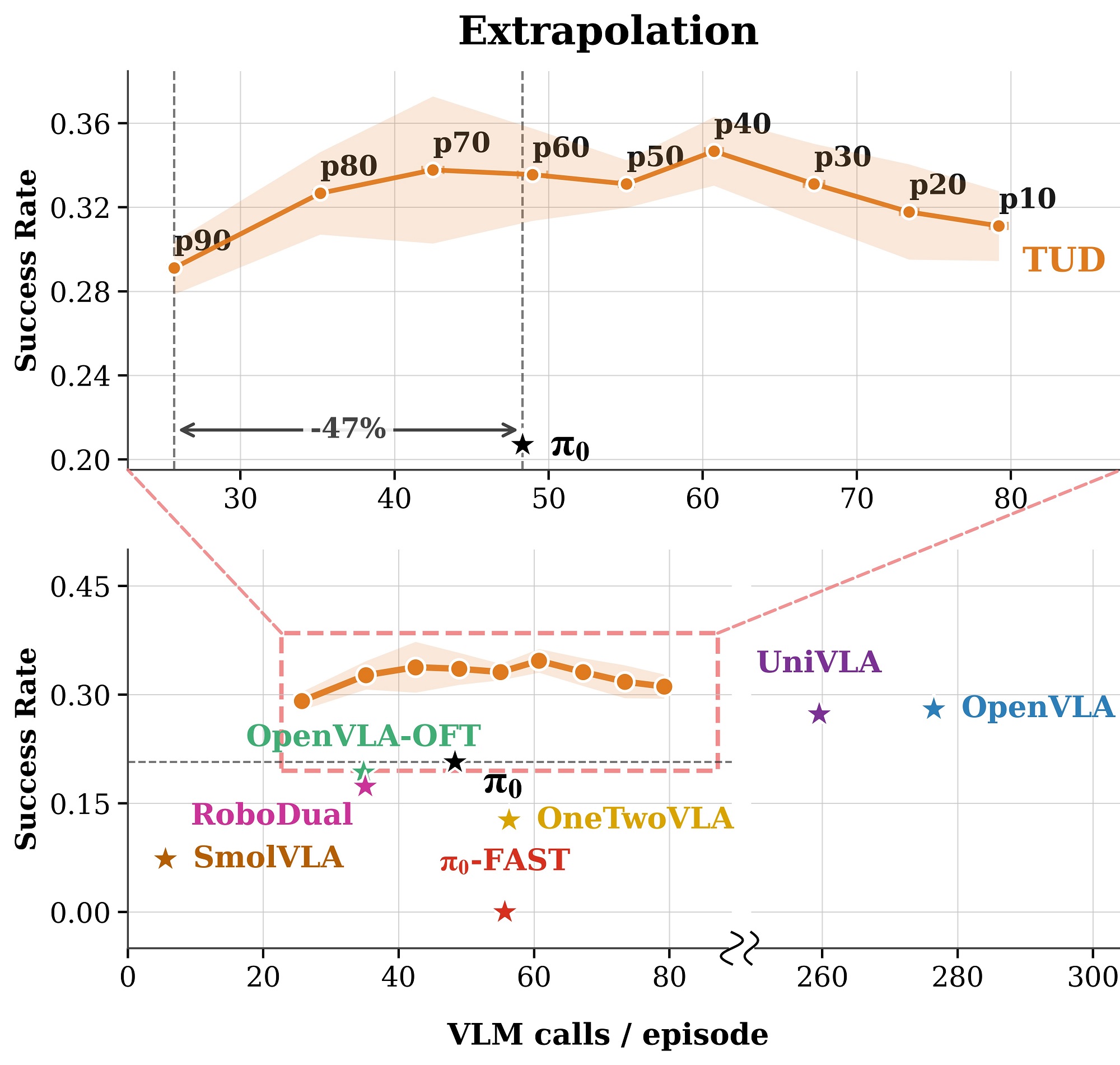}
    \end{subfigure}

    \vspace{-10pt}
\end{figure}

\subsection{Extended Temporal Dynamics of Uncertainty across Diverse Rollouts}
\label{appendix:uncertainty_temporal_dynamics}

To complement the long-horizon rollout shown in Figure~\ref{fig:scenario_uncertainty} of the main paper, we provide four additional rollouts in Figure~\ref{fig:scenario_uncertainty_supp} that cover the remaining VLA-Arena domain families: two extrapolation episodes (one failure, one success), one safety-critical episode and one distractor episode. From top to bottom, the figure shows the preposition-combinations failure, the unseen-objects success, the cautious-grasp success, and the static-distractors success. The plotting convention is identical to the main paper: $z$-normalized $u_t$ over environment time with $K = 5$ keyframes (highest-$u_t$ peaks plus the rollout endpoints) and the corresponding RGB frames in time order beneath each curve.

\textbf{Extrapolation.} For the preposition-combinations failure \textit{``Pick the tomato in the top layer of the drawer and place it on the bowl between the vase and the teapot''}, the gripper hovers above the drawer without grasping the target and then retracts, and the corresponding $u_t$ trajectory exhibits a sustained spike around the failed grasp attempt.

The success of the unseen-objects ``\textit{Pick up the onion and place it in the box''} shows the opposite shape: $u_t$ produces brief, localized spikes at first contact and at release, and stays low during the long carry phase.

\textbf{Safety.} For the cautious-grasp success \textit{``Pick up the scissors and place it on the top of the cabinet''}, the gripper aligns above the target before each contact event (once before grasp, once before placement), and $u_t$ spikes at exactly those alignment moments.

\textbf{Distractor.} For the static-distractors success \textit{``Pick the mango on the table and place it on the bowl''}, $u_t$ stays low during simple translations such as approaching the object or transporting it after the grasp, and rises sharply during the grasp itself where fine-grained alignment is required.

Across these rollouts the same pattern recurs: $u_t$ stays low during translation-only motion and rises in moments that require precise alignment or recovery from a failed interaction. Failure cases produce sustained or irregular peaks around the unsuccessful attempt, while successes produce brief, localized spikes at contact events.

\begin{figure}[t]
    \centering
    \includegraphics[width=\linewidth]{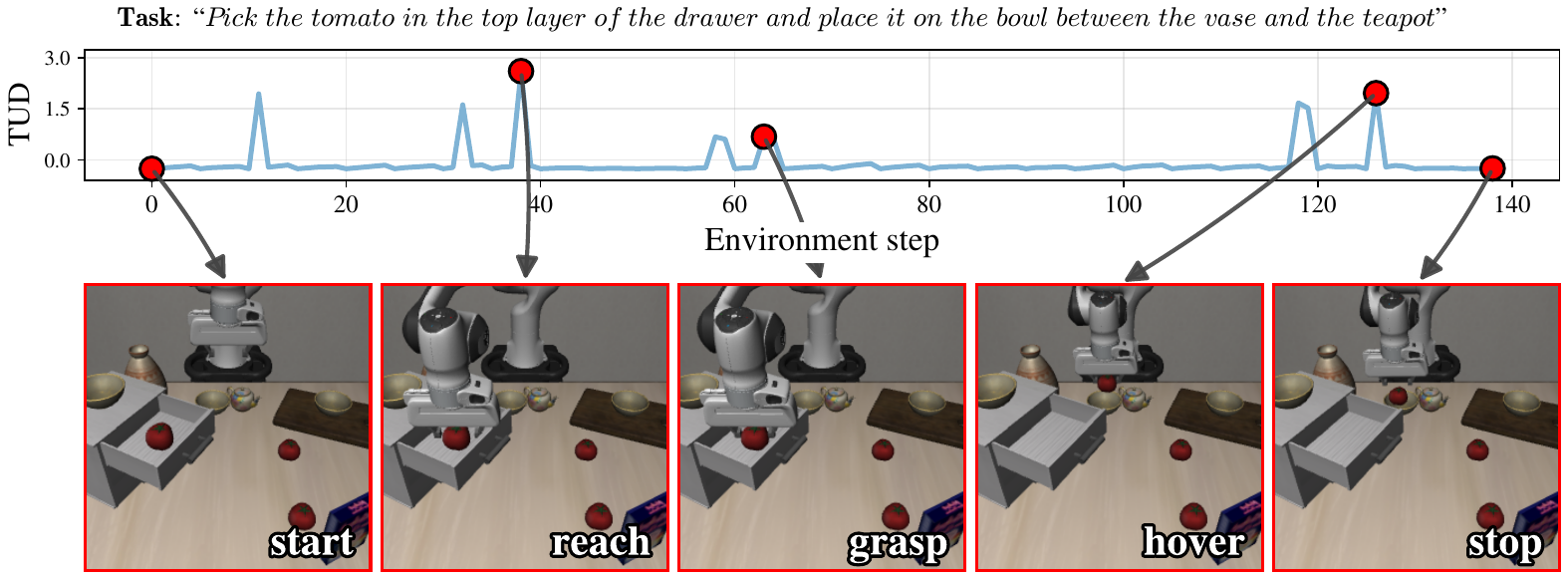}

    \vspace{4pt}

    \includegraphics[width=\linewidth]{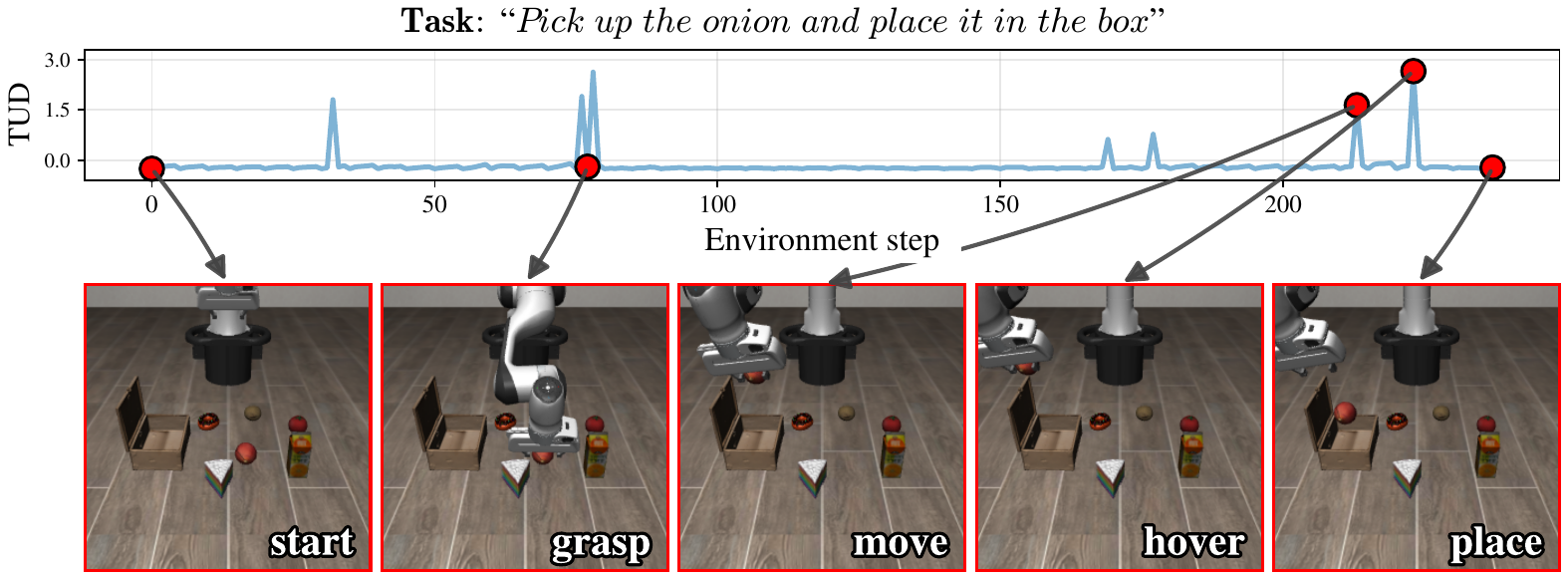}

    \vspace{4pt}

    \includegraphics[width=\linewidth]{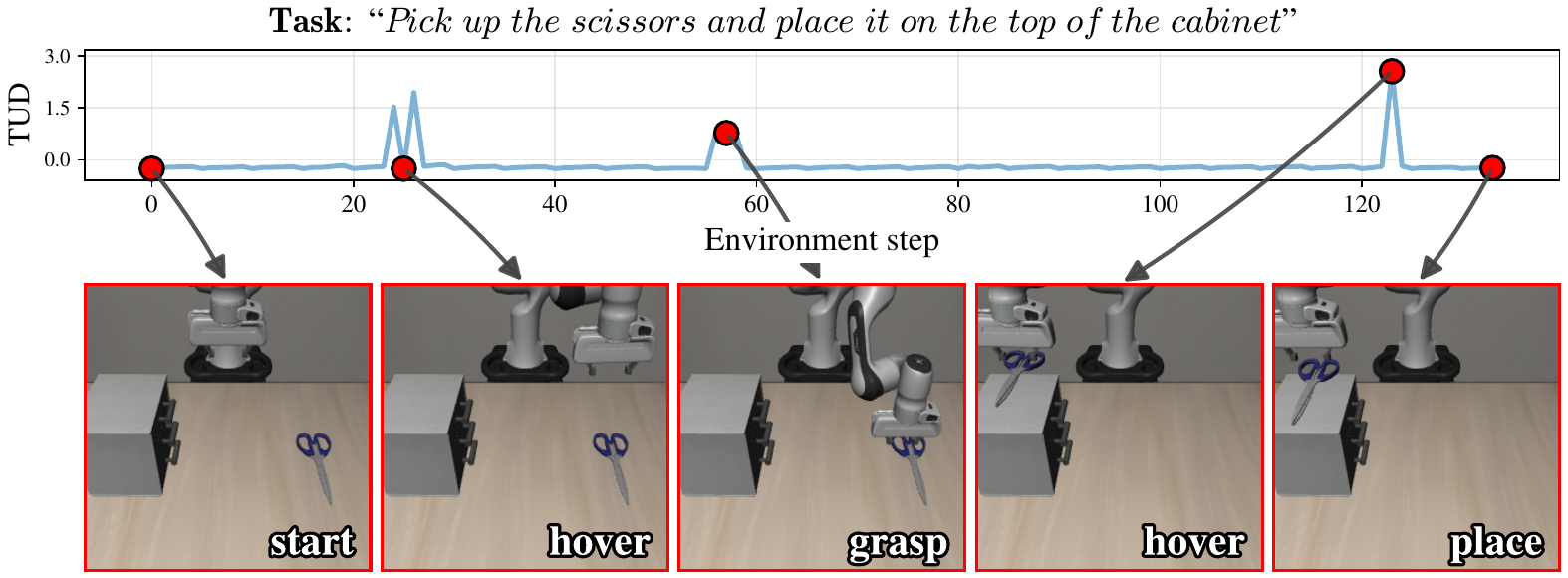}

    \vspace{4pt}
    \includegraphics[width=\linewidth]{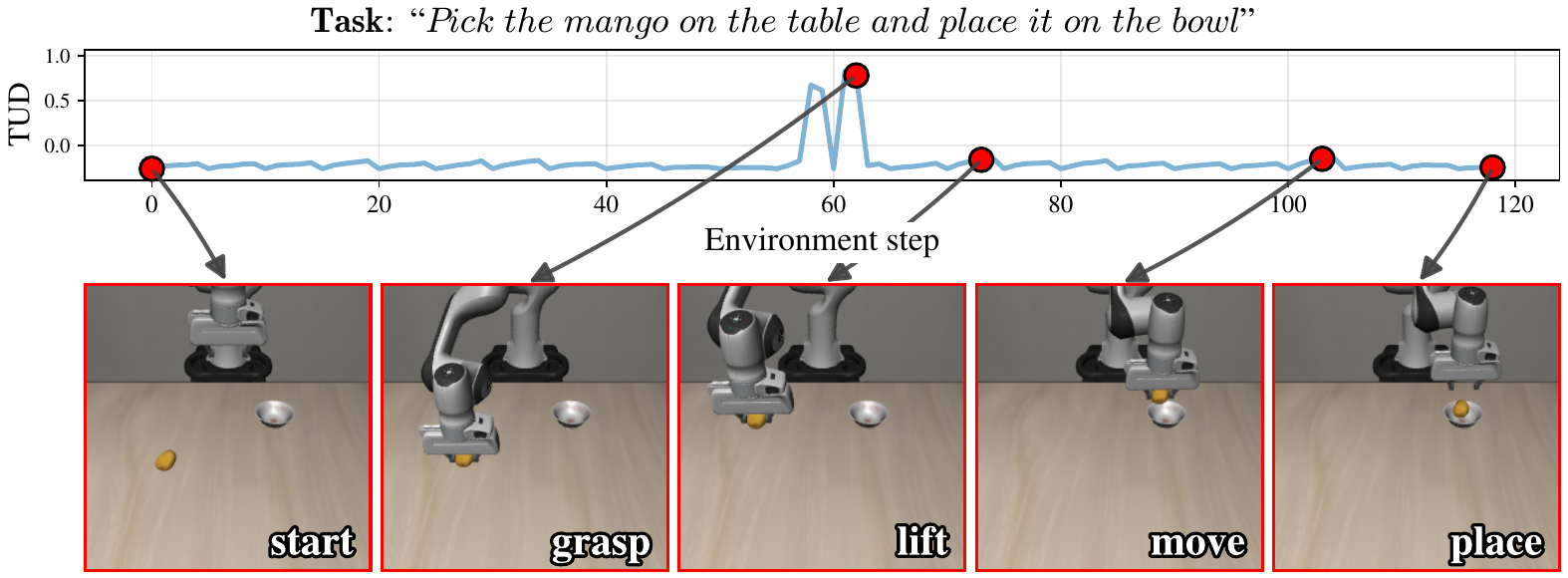}

    \caption{\textbf{Additional per-rollout $u_t$ trajectories on VLA-Arena.} From top to bottom: extrapolation (preposition combinations) failure,
    extrapolation (unseen objects) success, safety (cautious grasp) success, and distractor (static distractors) success. Same plotting convention as  Figure~\ref{fig:scenario_uncertainty}.}
    \label{fig:scenario_uncertainty_supp}
\end{figure}

\clearpage
\subsection{\texorpdfstring{CALVIN ABC$\to$D Full Comparison}{CALVIN ABC->D Full Comparison}}
\label{appendix:calvin_full}

Figure~\ref{fig:calvin_pareto} reports the CALVIN ABC$\to$D evaluation results referenced in Section~\ref{sec:exp_efficiency}. The orange curve shows our method (TUD) evaluated across a sweep of the percentile $p$. Blue and green stars denote two open-source fixed-frequency baselines: OpenHelix and Robodual. TUD reaches a longer average length than both fixed-frequency baselines while using fewer VLM calls per sub-task, suggesting that adapting the generalist call frequency also carries over to long-horizon chains of sub-tasks.

\begin{figure}[ht]
    \centering
    \includegraphics[width=0.97\textwidth]{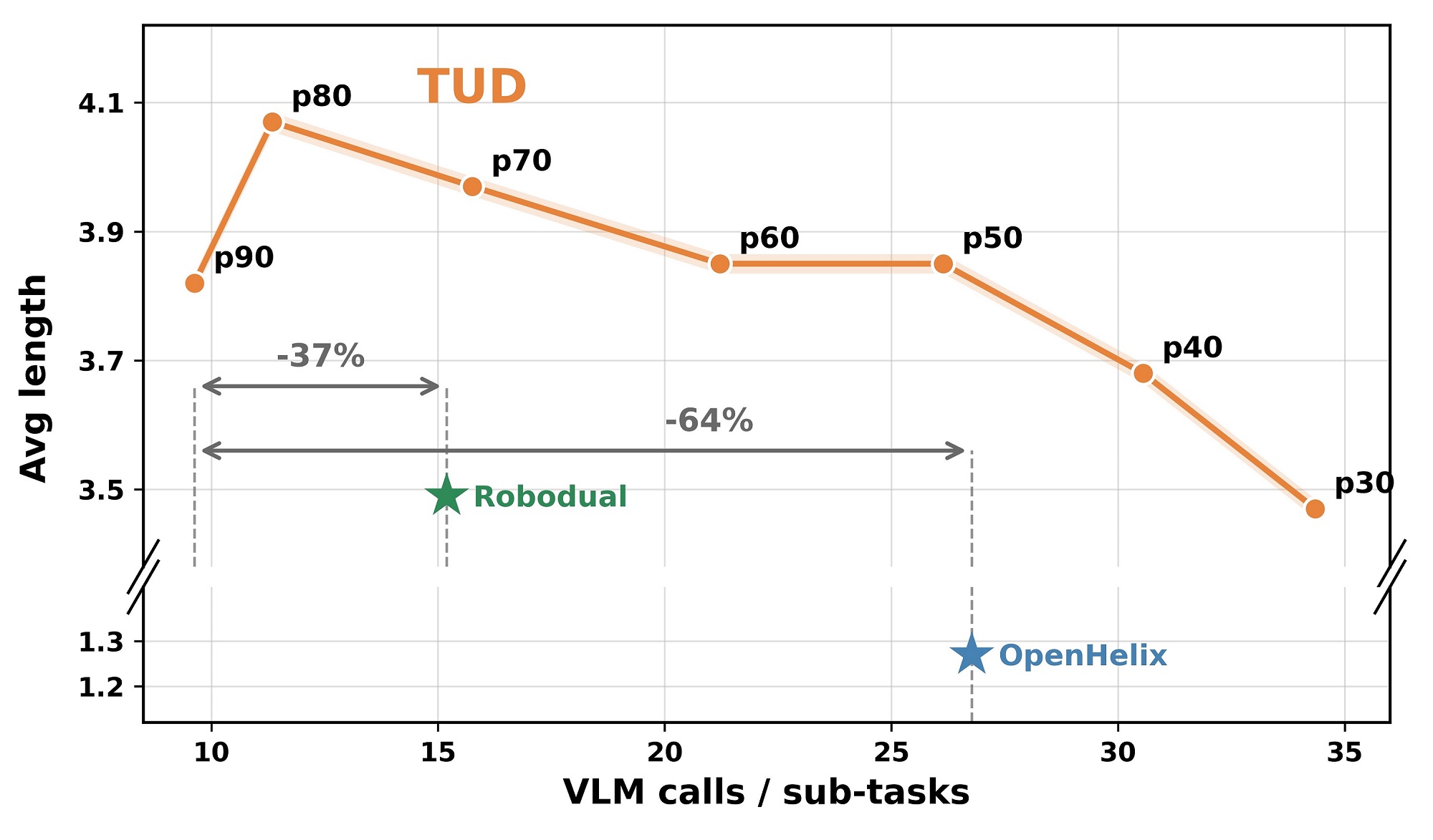}
    \caption{Cost-success trade-off on CALVIN ABC$\to$D evaluation (100 sequences, max 360 steps, each chaining 5 sub-tasks). The $x$-axis shows mean VLM calls per sub-task; the $y$-axis shows completed sub-tasks per chain (out of 5).}
    \label{fig:calvin_pareto}
\end{figure}

\section{Real Robot Experiment}
\label{appendix:real_robot}

\subsection{Setup}
\label{appendix:real_robot_setup}
\begin{wrapfigure}{r}{0.5\textwidth}
\centering
\vspace{-10pt}
\makebox[\linewidth][c]{%
    \includegraphics[width=0.45\textwidth]{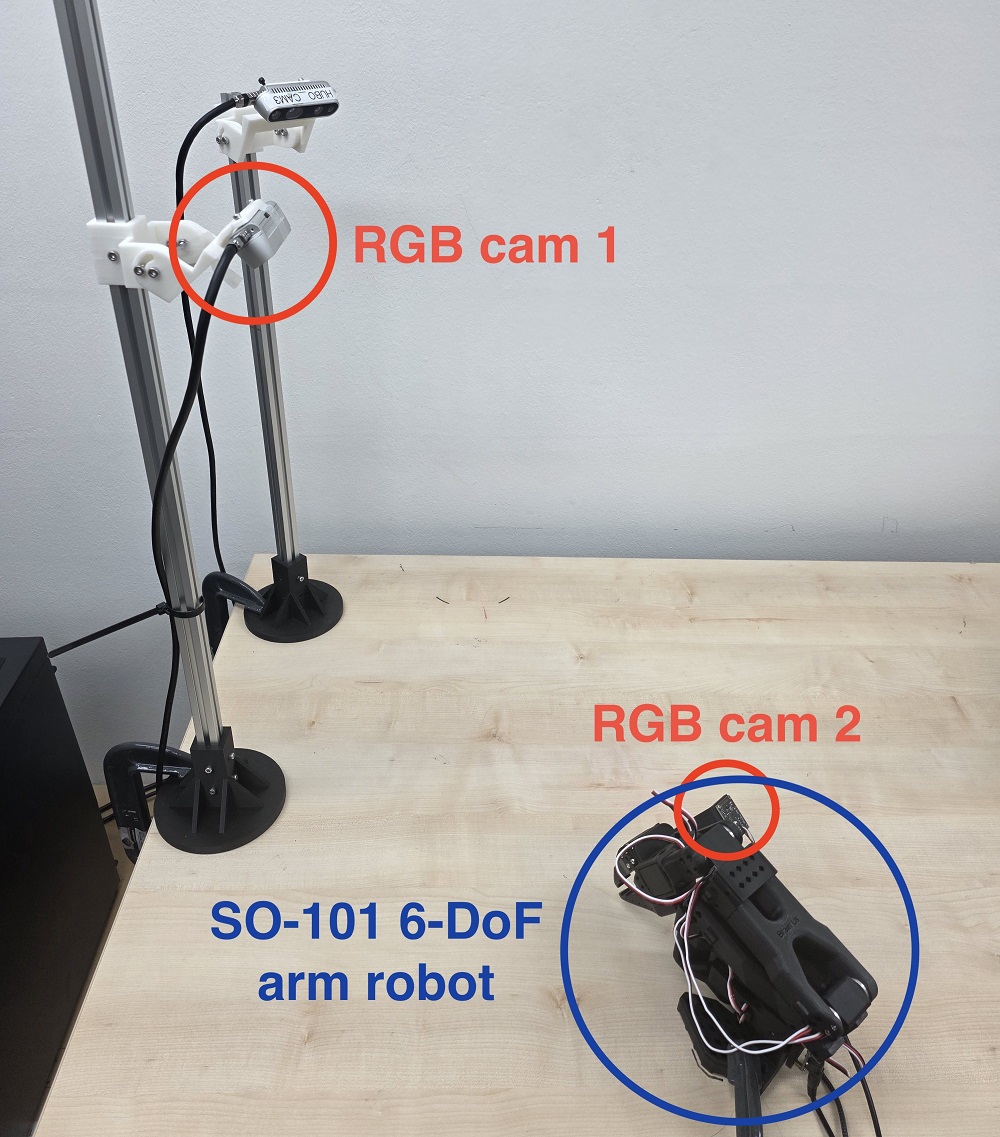}%
}
\caption{\textbf{Real robot setup.} SO-101 6-DoF arm with third-person and gripper-view RGB cameras.}
\label{fig:robot_setup}
\vspace{-10pt}
\end{wrapfigure}

As shown in Figure~\ref{fig:robot_setup}, the platform described in Section~\ref{sec:exp_setup} is an SO-101 6-DoF arm robot with two RGB cameras: a third-person view and a gripper view. Models were trained on a single H200 GPU, and inference was performed on a TITAN RTX.
We collected data, trained, and evaluated on a total of three task instructions: \textit{``Put the pen into the white bucket''}, \textit{``Close the bottom drawer''}, and \textit{``Stack the small cube on the big cube''}.
Figure~\ref{fig:test_env} shows the test environments for each task. For training data, we collected 100 tele-operated demonstrations per task while varying the object positions, and for evaluation, each method was evaluated with 10 rollouts per task across two object positions.

\begin{figure}[ht]
\centering
\caption{\textbf{Test environments.} Evaluation scenes for the three task instructions, with two object positions per task.}
\begin{tabular}{cc}
\includegraphics[width=0.42\linewidth]{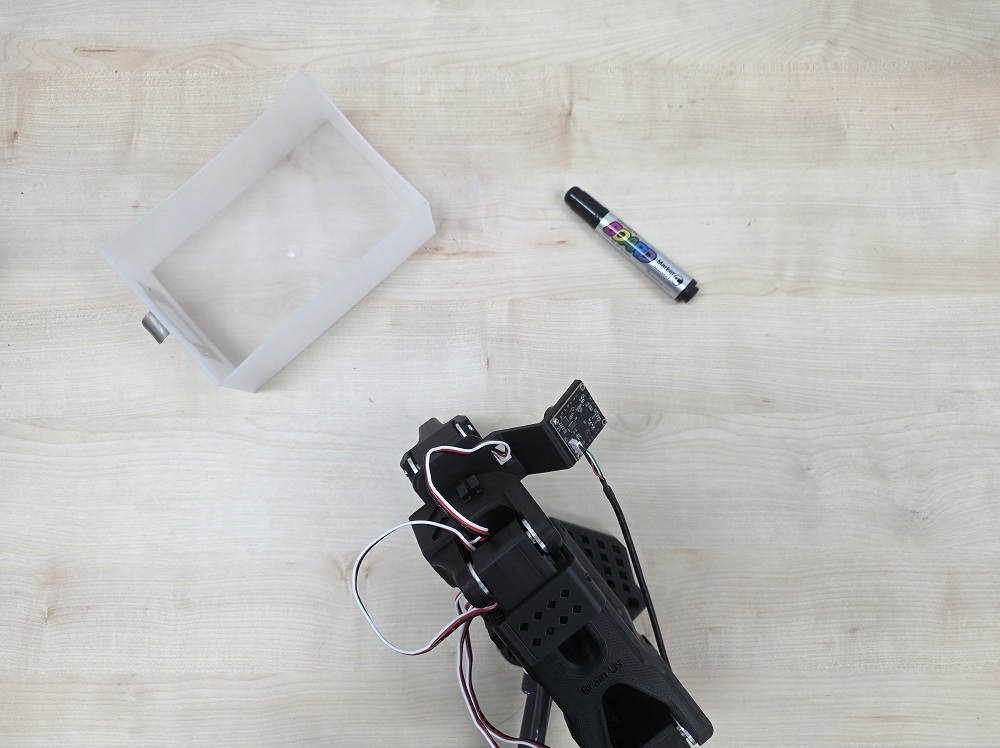} &
\includegraphics[width=0.42\linewidth]{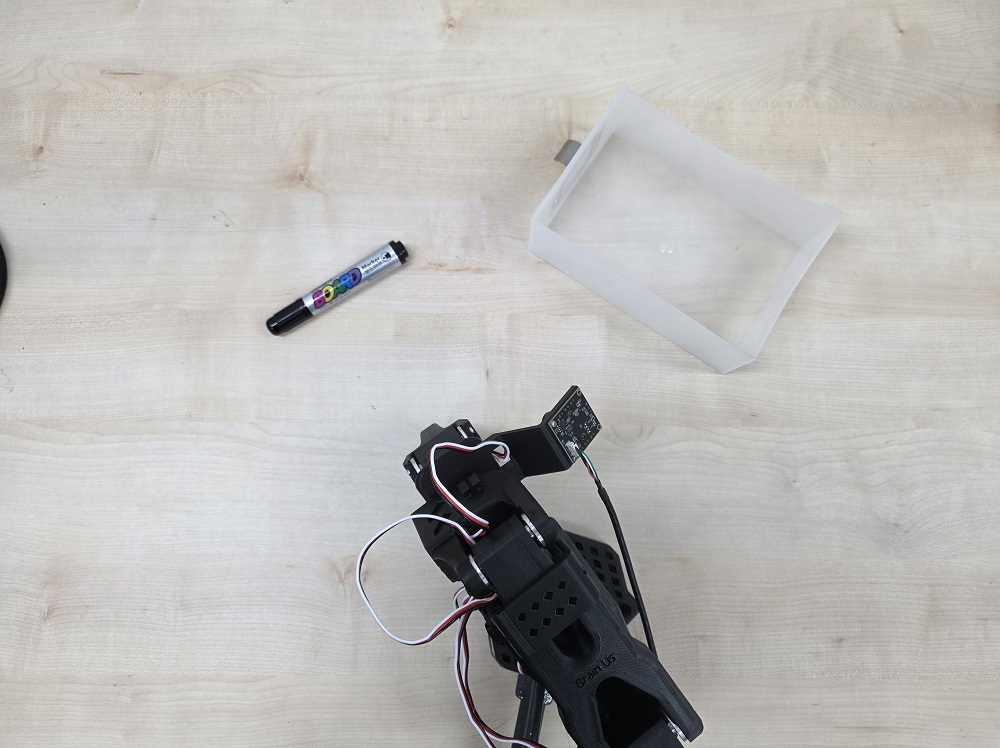} \\
\multicolumn{2}{c}{\small (a) ``Put the pen into the white bucket''} \\[6pt]
\includegraphics[width=0.42\linewidth]{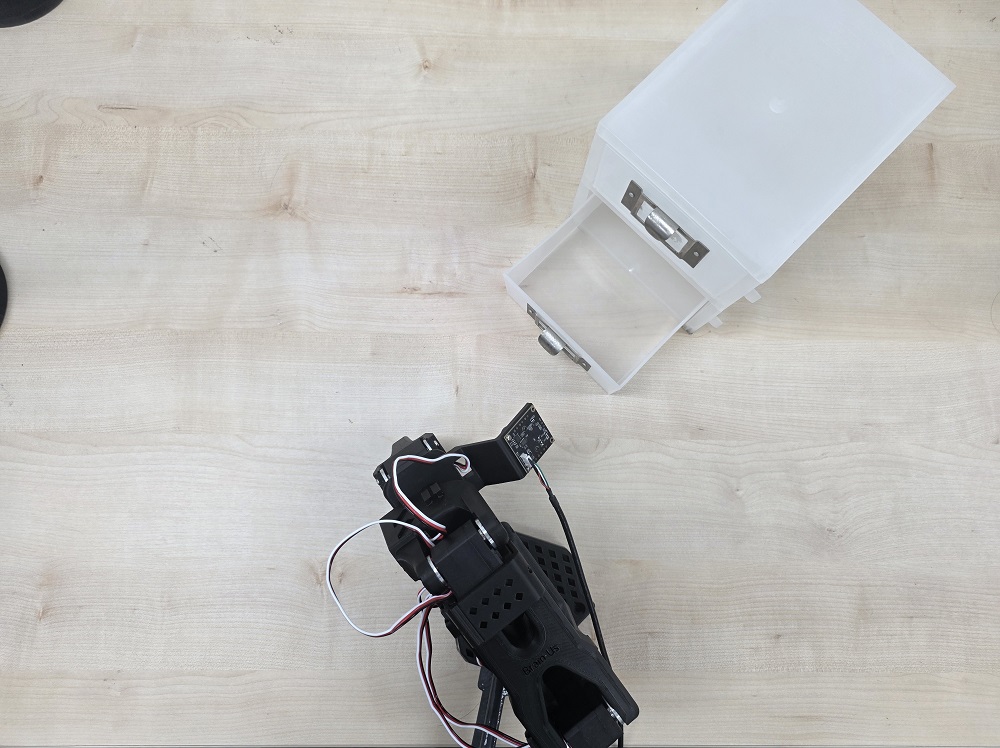} &
\includegraphics[width=0.42\linewidth]{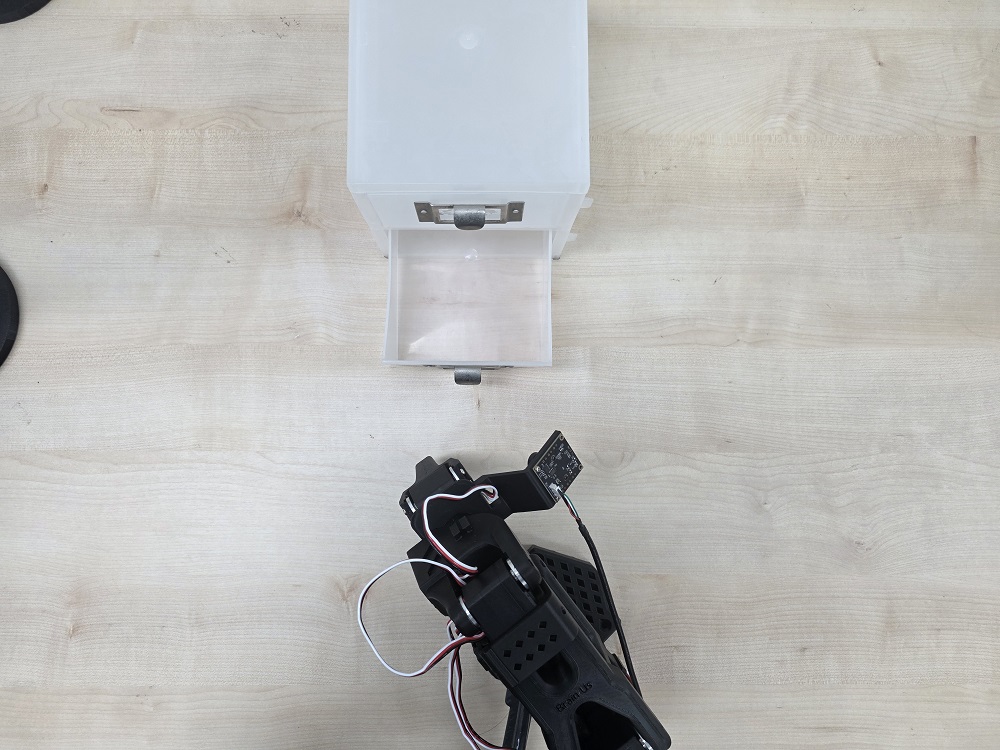} \\
\multicolumn{2}{c}{\small (b) ``Close the bottom drawer''} \\[6pt]
\includegraphics[width=0.42\linewidth]{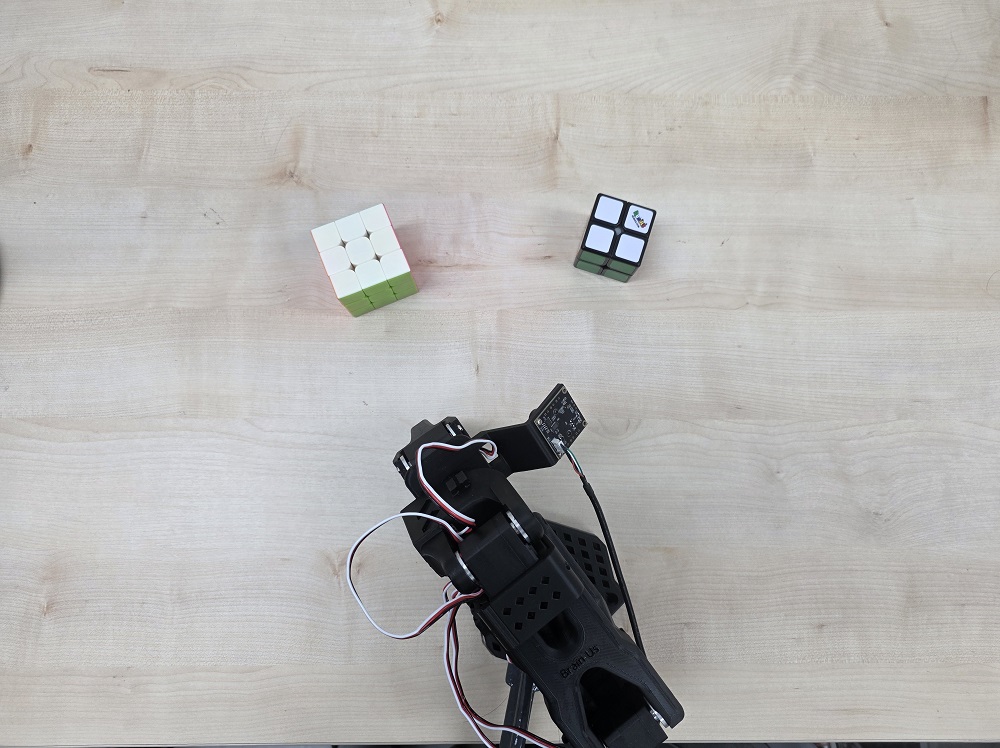} &
\includegraphics[width=0.42\linewidth]{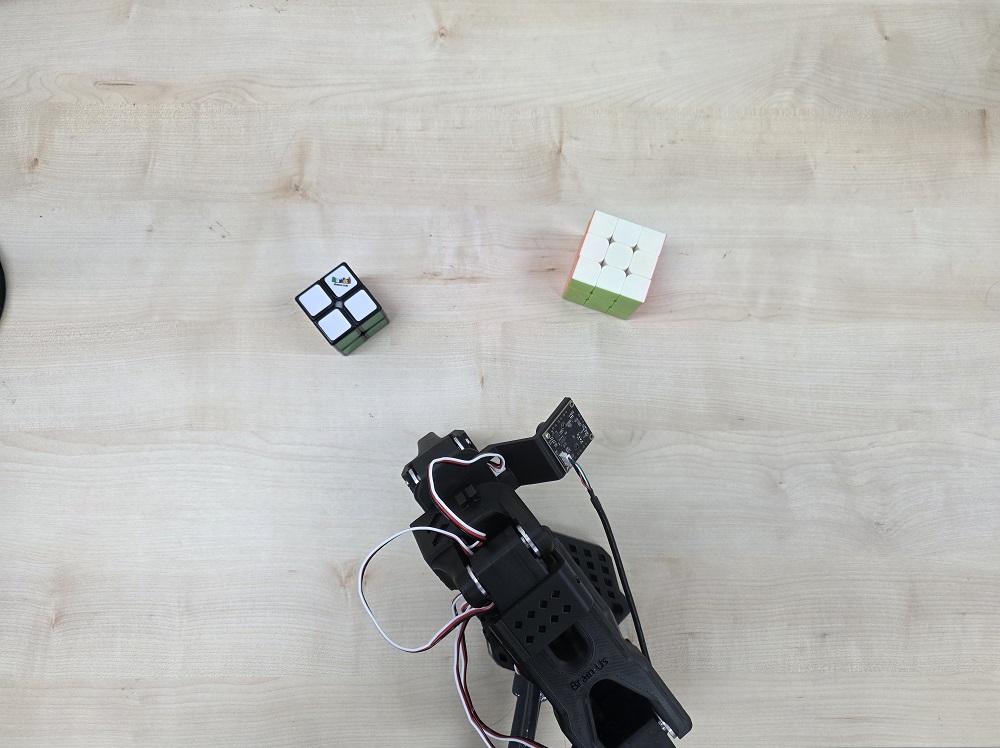} \\
\multicolumn{2}{c}{\small (c) ``Stack the small cube on the big cube''} \\
\end{tabular}
\label{fig:test_env}
\end{figure}

This section expands the real-robot evaluation of Section~\ref{sec:exp_real_robot}. Table~\ref{tab:real_robot_main_body} reports the average Success Rate, Calls/ep, and Wall-clock over the three tasks, while here we report the per-task breakdown, the end-to-end latency, and the control-loop jitter, all measured on the same rollouts.

\subsection{Per-Task Breakdown}
\label{appendix:real_robot_main}
Table~\ref{tab:real_robot_main} is the per-task performance breakdown, and it shows that the gains in performance and call count are not carried by one easy task. TUD p90 achieved the highest performance on all three task instructions, reaching $1.0$ SR on the pen task with $6.4$ calls against $42.4$ for $\pi_0$, and lifting the stack task, on which every baseline was weakest and showed a low SR, to $0.8$.

\begin{table}[ht]
\centering
\caption{\textbf{Real robot comparison per task.} Metrics are the same as in Table~\ref{tab:real_robot_main_body}. SR is the success rate; Calls and Wall are the generalist (VLM prefix) call count and the wall-clock time in seconds, both per episode. Each per-task entry is over $10$ rollouts; the \emph{mean} columns average the three tasks and reproduce the values in
Table~\ref{tab:real_robot_main_body}.}
\label{tab:real_robot_main}
\small
\resizebox{\textwidth}{!}{%
\begin{tabular}{l ccc ccc ccc ccc}
\toprule
 & \multicolumn{3}{c}{\textbf{pen}} & \multicolumn{3}{c}{\textbf{drawer}} & \multicolumn{3}{c}{\textbf{stack}} & \multicolumn{3}{c}{\textbf{mean}} \\
\cmidrule(lr){2-4} \cmidrule(lr){5-7} \cmidrule(lr){8-10} \cmidrule(lr){11-13}
Method & SR & Calls & Wall (s) & SR & Calls & Wall (s) & SR & Calls & Wall (s) & SR & Calls & Wall (s) \\
\midrule
Single-system $\pi_0$ & 0.5 & 42.4 & 69.8 & 0.5 & 47.8 & 76.7 & 0.3 & 56.8 & 92.0 & 0.43 & 49.0 & 79.5 \\
\midrule
Fixed $\Delta=9$   & 0.5 & 46.8 & 73.7 & 0.5 & 38.9 & 62.6 & 0.4 & 59.7 & 93.0  & 0.47 & 48.5 & 76.4 \\
Fixed $\Delta=25$  & 0.4 & 36.1 & 79.8 & 0.6 & 25.6 & 56.3 & 0.3 & 42.2 & 92.2  & 0.43 & 34.6 & 76.1 \\
Fixed $\Delta=50$  & 0.4 & 25.8 & 79.0 & 0.6 & 19.5 & 58.3 & 0.2 & 32.5 & 100.1 & 0.40 & 25.9 & 79.2 \\
\midrule
TUD p50 & 0.9 & 19.6 & 38.9 & 0.7 & 18.7 & 50.6 & 0.7 & 35.3 & 70.6 & 0.77 & 24.5 & 53.4 \\
TUD p70 & 0.8 & 18.6 & 43.2 & 0.6 & 22.4 & 65.0 & 0.8 & 21.3 & 52.2 & 0.73 & 20.8 & 53.5 \\
\rowcolor[HTML]{D5E8D4}
\textbf{TUD p90} & \textbf{1.0} & \textbf{6.4} & \textbf{22.6} & \textbf{0.7} & \textbf{15.2} & \textbf{48.1} & \textbf{0.8} & \textbf{14.7} & \textbf{44.9} & \textbf{0.83} & \textbf{12.1} & \textbf{38.5} \\
\bottomrule
\end{tabular}}
\end{table}

\subsection{End-to-End Latency and Control-Loop Jitter}
\label{appendix:real_robot_timing}
\begin{wraptable}{r}{0.41\textwidth}
\vspace{-1.2\baselineskip}
\centering
\caption{\textbf{Latency.} mean $\pm$ SD in ms.}
\label{tab:step_type}
\vspace{-0.4\baselineskip}
\scriptsize
\setlength{\tabcolsep}{5pt}
\renewcommand{\arraystretch}{1.05}
\begin{tabular*}{\linewidth}{@{\extracolsep{\fill}}lcc@{}}
\toprule
\textbf{Path} & \textbf{Image$\rightarrow$action} & \textbf{Cycle} \\
\midrule
Generalist & 1077 $\pm$ 109 & 1107 $\pm$ 104 \\
Specialist & 94 $\pm$ 3    & 120 $\pm$ 3 \\
\bottomrule
\end{tabular*}
\vspace{-1.3\baselineskip}
\end{wraptable}

\textbf{Latency of generalist calls and specialist forwards.} On the TITAN RTX, a generalist call takes $1.0$--$1.2$\,s, while a specialist forward takes approximately $87$\,ms. We report image-to-action delay, measured from camera-frame acquisition to transmission of the corresponding robot command, and control-cycle time, measured between consecutive commands. Table~\ref{tab:step_type} reports both measurements separately for the two execution paths.

\begin{wraptable}{r}{0.42\textwidth}
\centering
\vspace{-1.0\baselineskip}
\caption{\textbf{Control-loop jitter} (ms).}
\label{tab:timing_jitter}
\scriptsize
\setlength{\tabcolsep}{5.5pt}
\renewcommand{\arraystretch}{0.95}
\begin{tabular}{lcc}
\toprule
\textbf{Method} & \textbf{Image$\rightarrow$action} & \textbf{Cycle} \\
 & \scriptsize mean / p95 & \scriptsize mean $\pm$ SD \\
\midrule
$\pi_0$        & 70 / 1076  & 95 $\pm$ 256 \\
\midrule
Fixed $\Delta=9$    & 115 / 1180 & 136 $\pm$ 321 \\
Fixed $\Delta=25$   & 60 / 96    & 82 $\pm$ 200 \\
Fixed $\Delta=50$   & 39 / 94    & 62 $\pm$ 140 \\
\midrule
TUD p50        & 59 / 100   & 81 $\pm$ 202 \\
TUD p70        & 48 / 95    & 71 $\pm$ 175 \\
\rowcolor[HTML]{D5E8D4}
\textbf{TUD p90} & \textbf{38 / 95} & \textbf{59 $\pm$ 131} \\
\bottomrule
\end{tabular}
\vspace{-0.5\baselineskip}
\end{wraptable}
\textbf{Control-loop jitter by method.} To quantify control-loop jitter under adaptive generalist calling, we aggregate image-to-action latency and control-cycle duration over the same real-robot rollouts for each method.
As shown in Table~\ref{tab:timing_jitter}, increasing the fixed invocation interval reduces both average latency and jitter, but does so by reusing the cached generalist context for longer periods. TUD achieves a similar reduction
adaptively. In particular, TUD p90 attains image-to-action latency and control-cycle variability comparable to fixed $\Delta=50$, while achieving substantially higher task success. This suggests that adaptive invocation can retain the timing benefits of infrequent generalist calls without committing to a uniformly long reuse interval throughout the rollout.

\end{document}